\def\BibTeX{{\rm B\kern-.05em{\sc i\kern-.025em b}\kern-.08em
    T\kern-.1667em\lower.7ex\hbox{E}\kern-.125emX}}
\newtheorem{theorem}{Theorem}
\newtheorem{lemma}{Lemma}
\newtheorem{proposition}{Proposition}
\newtheorem{corollary}{Corollary}
\newcites{apndx}{References in Appendix}
\let\emptyset\varnothing
\newcommand{\prob}[1]{\mathbb{P}\left\{ #1 \right\} }
\newcommand{\E}[1]{\mathbb{E}\left\{ #1 \right\} }
\newcommand{\loss}[1]{\textsf{Loss}\left (#1\right)}
\newcommand{\re}[1]{\textcolor{black}{#1}}
\DeclareMathOperator*{\argmax}{arg\,max}
\begin{document}

\title{Learning, compression, and leakage:\\
Minimising classification error via\\
meta-universal compression principles
\thanks{F.R. is also with the Psychedelic Research Center, and with the Centre for Complexity Science, Imperial College London. F.R. was supported by the Ad Astra Chandaria foundation, and by the European Union’s H2020 research and innovation programme under the Marie Sk\l{}odowska-Curie grant agreement No. 702981. P.M. was funded by the Wellcome Trust (grant no. 210920/Z/18/Z).}
}

\author{\IEEEauthorblockN{Fernando E. Rosas}
\IEEEauthorblockA{\textit{Data Science Institute} \\
\textit{Imperial College London}\\
London, UK \\
\texttt{f.rosas@imperial.ac.uk}}
\and
\IEEEauthorblockN{Pedro A.M. Mediano}
\IEEEauthorblockA{\textit{Department of Psychology} \\
\textit{University of Cambridge}\\
Cambridge, UK \\
\texttt{pam83@cam.ac.uk}}
\and
\IEEEauthorblockN{Michael Gastpar}
\IEEEauthorblockA{\textit{School of Computer and Communication Sciences} \\
\textit{\'{E}cole Polytechnique F\'{e}d\'{e}rale de Lausanne}\\
Lausanne, Switzerland \\
\texttt{michael.gastpar@epfl.ch}}
}

\maketitle

\begin{abstract}

Learning and compression are driven by the common aim of identifying and exploiting statistical regularities in data, which opens the door for fertile collaboration between these areas. A promising group of compression techniques for learning scenarios is \textit{normalised maximum likelihood} (NML) coding, which provides strong guarantees for compression of small datasets --- in contrast with more popular estimators whose guarantees hold only in the asymptotic limit. Here we \re{consider a} %
NML-based decision strategy for supervised classification problems, and show that it attains heuristic PAC learning when applied to a wide variety of models. Furthermore, we show that the misclassification rate of our method is upper bounded by the \textit{maximal leakage}, a recently proposed metric to quantify the potential of data leakage in privacy-sensitive scenarios. 

\end{abstract}

\begin{IEEEkeywords}
Supervised learning; Universal Compression; Maximal Leakage; Normalised Maximum Likelihood%
\end{IEEEkeywords}

\section{Introduction}

Since compression and learning are both based on exploiting statistical regularities of the data, it is often possible to leverage compression techniques to enable novel learning methods. Examples of successful translations abound in the literature, 
including the use of universal compression methods such as Context Tree Weighting~\cite{willems1995context} for predicting time series via variable-order Markov chains~\cite{begleiter2004prediction}.

Among the literature on universal compression, the %
work of Jorma Rissanen and the Minimum Description Length (MDL) community is particularly well-suited for statistical learning. 
There are two particularly attractive aspects of the MDL philosophy from a learning perspective (c.f. ~\cite{rissanen1989stochastic,grunwald2007minimum}): a focus on the data itself and not on assumptions about related probabilistic models, and an emphasis on estimators that have useful properties for finite sample sizes. These ideas lead to the use of \emph{normalised maximum likelihood} (NML) codes, previously introduced by Shtar'kov~\cite{shtar1987universal}, to develop universal compression methods~\cite{Rissanen1996fisher}. NML distributions provide minimax optimal  %
compression features for finite sample sizes --- in contrast to e.g. distributions obtained via maximum likelihood estimation that only have guarantees in the asymptotic regime.

Despite of their attractive properties, NML distributions have not been much %
explored in the statistical learning literature. An important exception is the work reported in Refs.~\cite{Fogel2019individualdata,bibas2019regression,bibas2019deep}, which leverages conditional NML (cNML) distributions --- originally introduced by Roos \& Rissanen~\cite{roos2008sequentially} --- to address a supervised learning setting. The favourable properties of cNML-based learning strategies have been demonstrated for the cases of linear regression~\cite{bibas2019regression} and deep neural networks~\cite{bibas2019deep}. Unfortunately, the available theoretical guarantees for the learnability of cNML models are still limited. %

Another important contribution of Rissanen was the development of the notion of \emph{stochastic complexity}, a metric of model complexity that refines well-known model selection procedures such as the Akaike and Bayesian Information Criteria~\cite{Rissanen1996fisher}. 
Stochastic complexity has a remarkable similarity to \emph{maximal leakage}, a measure introduced in Ref.~\cite{leakage} to quantify leakage risk in privacy-sensitive scenarios. %
This formal similarity is particularly intriguing given the connection that exist between data privacy and learning: as privacy-preserving algorithms only process general properties of datasets without focusing on particular data samples~(see \cite{Rassouli2019}), they are less likely to fall prey to overfitting. This idea was first developed in the context of differential privacy~\cite{dwork2014algorithmic}, and recent reports have shown that maximal leakage can be used to bound the generalisation error in supervised learning scenarios~\cite{Issa2019strengthened,esposito2019new}.

The goal of this paper is to establish a rigorous link between supervised learning, NML methods, and maximal leakage. For this, we \re{employ a} NML-based decision strategy based on meta-universal compression principles~\cite[Ch. 11.2]{grunwald2007minimum}, where the model is dynamically adapted according to the training data. 
We provide an upper bound, based on maximal leakage, to the performance gap between our NML strategy and the (optimal) MAP criterion (Theorem~\ref{prop:bound_nice}). Furthermore, using this bound we show
that our NML strategy possesses strong learning guarantees that hold in various contexts (Theorem~\ref{teo:pac} and Proposition~\ref{prop:non}). 
Importantly, while most of the MDL literature is based on logarithmic losses (including~\cite{Fogel2019individualdata,bibas2019regression,bibas2019deep}), our approach quantifies performance in terms of classification accuracy, 
\re{which is} a more natural metric for supervised learning scenarios.

The rest of the paper is structured as follows. Section~\ref{sec:preliminars} introduces our supervised learning scenario and discusses fundamental notions of universal compression and information leakage. Section~\ref{sec:main_results} presents our main technical results, and Section~\ref{sec:conc} summarises our conclusions. The Appendices provide the proofs of our results, illustrate the findings in a simple scenario, and discuss some implementation issues.

\section{Preliminaries}\label{sec:preliminars}

\subsection{Scenario}
Let us consider a classification task where one needs to decide which class %
$Y\in\mathcal{Y}=\{c_1,\dots,c_K\}$ %
a given observation $X\in\mathcal{X}$ belongs to. 
A \emph{hypothesis} is a (possibly stochastic) mapping $h:\mathcal{X}\to\mathcal{Y}$, whose performance is measured using the \emph{0--1 loss function} given by
\begin{equation}
\loss{y,\tilde{y}} = 
\begin{cases}
1 \qquad \text{if} \quad y \neq \tilde{y}, \\
0 \qquad \text{otherwise.}
\end{cases}\nonumber
\end{equation}
The misclassification probability of $x$ under $h$ is calculated as
\begin{align}
\texttt{E}(h; x )  \coloneqq& \mathbb{E}\{ \loss{ Y, h(X) } | X=x\} \nonumber\\
=& \prob{ Y \neq h(X) | X=x} \nonumber\\
=& 1 - f( h(x) |x),
\end{align}
where $f(y|x)$ is the conditional probability of $\{Y=y\}$ given $\{X=x\}$. The misclassification rate of $h$ is defined as $\texttt{E}(h)  \coloneqq \mathbb{E}\{ \texttt{E}( h; X) \} = \mathbb{E}\{ \loss{ Y,h(X) } \}$. 
The well-known \textit{maximum-a-posteriori} (MAP) rule, defined as\footnote{In case there is more than one value of $y$ that maximises \eqref{eq:MAP}, $h_\text{MAP}(x)$ assigns one of them randomly.}
\begin{equation}\label{eq:MAP}
h_\text{MAP}(x) \coloneqq \argmax_{y\in \mathcal{Y}} f(y|x),
\end{equation}
can be shown to attain a minimal misclassification rate given by
$\texttt{E}(h_\text{MAP}; x) 
= 1 - \max_{y\in\mathcal{Y}} f(y|x)$~\cite{poor2013introduction}. %
Unfortunately, to build $h_\text{MAP}$ one needs precise knowledge of $f(y|x)$,  which is rarely available in most scenarios of practical interest.

Consider now $n$ available samples for training denoted by $z_1=(x_1,y_1),\dots,z_n=(x_n,y_n)$, and denote the whole dataset by $z^n=(z_1,\dots,z_n)$. Hypotheses that are built on training data correspond to functions $h:\mathcal{X}\times \mathcal{Z}^n \to \mathcal{Y}$, where $\mathcal{Z}:= \mathcal{X}\times \mathcal{Y}$. Then, a hypothesis $h(x,z^n)$ can be equivalently expressed as %
\begin{equation}\label{eq:q}
h_q(x,z^n) = \argmax_{y\in\mathcal{Y}} q(y|x,z^n),
\end{equation}
where $q(y|x,z^n)$ is a (possibly not unique) suitable conditional probability distribution. 
The misclassification rate of $h_q$ is %
\begin{align}
\texttt{E}(h_q;x,z^n) &\coloneqq \prob{Y\neq h_q(X,Z^n) | X=x,Z^n=z^n} \\
&= 1 - f( h_q(x,z^n) |x).
\end{align}
Due to the optimality of $h_\text{MAP}$, $\texttt{E}(h_{\text{MAP}}; x) \leq \texttt{E}(h_q;x,z^n)$ holds for any hypotheses given by $q(y|x,z^n)$. 

\subsection{Universal compression}
\label{sec:universal_compression}

While elementary compression algorithms consider data coming from a single information source (i.e.  i.i.d. data generated from symbols in the alphabet $\mathcal{Y}$ according to a given probability distribution $p(y)$), universal compression approaches aim to 
be suitable to compress data \re{with respect to }%
a statistical model class $\mathcal{M}$ --- understood as a collection of probability distributions. %
The goal is to %
build distributions $q$ that attain %
low values of
\begin{equation}
\text{REG}_\text{max}(\mathcal{M}, q) \coloneqq \sup_{p\in \mathcal{M}} \max_{y\in\mathcal{Y}} \ln \frac{p(y)}{q(y)}
= 
\sup_{p\in \mathcal{M}} R(p,q),
\end{equation}
which stands for the ``maximal regret'' while using $q$ to code data related to any model \re{$p$} in $\mathcal{M}$~\cite{grunwald2007minimum}. 

A remarkable result from the MDL literature is that the minimiser of $\text{REG}_\text{max}$ can often be written in closed form, and is given by an NML distribution of the form
\begin{equation}
q_{\text{NML},\mathcal{M}}(y) = \frac{ \sup_{p\in\mathcal{M}} p(y) }{ Z_\mathcal{M} },
\end{equation}
where $Z_\mathcal{M} = \sum_{y \in \mathcal{Y}} \sup_{p\in\mathcal{M}} p(y)$ is a normalisation constant. The minimal regret is given by 
\begin{equation}
\min_{q} \text{REG}_\text{max}( \mathcal{M}, q ) 
= \text{REG}_\text{max}(\mathcal{M}, q_{\text{NML},\mathcal{M}})
= \ln  Z_\mathcal{M},
\end{equation}
being known as the \emph{stochastic complexity} of $\mathcal{M}$~\cite{Rissanen1996fisher}. %

Note that the NML might not be well-defined if $Z_\mathcal{M}$ diverges. One solution to those cases is to employ sub-models to reduce the minimal regret, since $\mathcal{M}' \subset \mathcal{M}$ implies $Z_{\mathcal{M}'} \leq Z_{\mathcal{M}}$. This approach is known as \textit{meta-universal} coding, which includes a range of techniques developed in the literature~\cite[Section 11.2]{grunwald2007minimum}.

\subsection{Quantifying information leakage}
\label{sec:leakage}

Consider a variable $\phi$ that parameterises the distributions $p_\phi(Y)$ that belong to  %
$\mathcal{M}$. 
We are interested in quantifying %
how much information about $\phi$ can be extracted 
from observations of $Y$. 
Note that this highly non-trivial issue %
is not properly addressed by naive applications of Shannon's mutual information or differential privacy criteria%
~\cite{du2012privacy,leakageLong}. 

We follow 
Ref.~\cite{leakage} and consider %
a random variable $U$ that is conditionally independent of $Y$ given $\phi$, and imagine guessing $U$ from $Y$ via $\hat{U}$, so that $U-\phi-Y-\hat{U}$ forms a Markov chain. Then, the \emph{maximal leakage} between $\phi$ and $Y$,
\begin{equation}\label{eq:leakage0}
\mathcal{L}(\phi \rightarrow Y)    \coloneqq
\sup_{U-\phi-Y-\hat{U}} \log 
\frac{ \mathbb{P}\{U=\hat{U}\} }{\max_{u\in\mathcal{U}} \mathbb{P}\{U=u\}},
\end{equation} 
characterizes the least protected secret $U$ (that is, the worst case over $U$) of $\phi$ with respect to $Y$. A closed-form formula for $\mathcal{L}(\phi \rightarrow Y)$ is given by~\cite[Corollary 4]{leakageLong}
\begin{align}\mathcal{L}(\phi\to Y) 
= \log \sum_{y\in\mathcal{Y}} \:\sup_{\theta\in\text{supp}(\phi)} f(y|\theta), 
\label{leakageContinous} 
\end{align}
with $\text{supp}(\phi) \coloneqq \{\theta\in\Theta: \mathbb{P}\{\phi=\theta\}>0\}$. This form is equivalent to the Sibson's mutual information of order infinity~\cite{infoRadius}, and has a number of useful properties and an operational interpretation that are discussed in Ref.~\cite{leakageLong}.

\section{Optimizing the hypothesis based on meta-universal coding principles}
\label{sec:main_results}

\subsection{Learning based on universal source coding}
\label{sec:learning_nml}

We first focus on a parametric model $\mathcal{P}$, which is a set of conditional distributions $p_{\bm\theta}(y|x)$ indexed by  $\bm\theta=(\theta_1,\dots,\theta_d) \in \mathbb{R}^d$. Following meta-universal coding principles (c.f. Section~\ref{sec:universal_compression}), we consider sub-models of the form 
\begin{equation}
\mathcal{A}(z^n) = \left\{p_{\bm\theta}(\cdot|\cdot) \in \mathcal{P}: \bm\theta \in \Theta(z^n)\right\} \subset \mathcal{M},    
\end{equation}
where $\Theta(z^n) \subset \mathbb{R}^d$ is a restriction in the space of parameters that depends on the traning set $z^n$. For the sub-model $\mathcal{A}(z^n)$, we define the following NML distribution:
\begin{equation}\label{eq:meta_NML}
q_{\text{NML},\mathcal{A}}(y|x,z^n) \coloneqq \frac{ \sup_{\bm \theta\in\Theta(z^n)} p_{\bm\theta}(y|x) }{ Z\big( x; \Theta(z^n) \big) },
\end{equation}
with $Z \big(x; \Theta(z^n)\big) = \sum_{y \in \mathcal{Y}} \sup_{\bm\theta \in \Theta(z^n)} p_{\bm\theta}(y|x)$. \re{Please note that this type of NML construction has been considered before in Ref.~\cite[Sec. 5]{Fogel2019individualdata}.}
Importantly, $Z \big(x; \Theta(z^n)\big) < \infty$ due to the finiteness of $\mathcal{Y}$, and hence $q_{\text{NML},\mathcal{A}}$ is well-defined for all $\mathcal{A}(z^n)$. The minimal regret attained by this NML distribution is $\ln Z \big(x; \Theta(z^n)\big)$, which corresponds to the stochastic complexity of model $\mathcal{A}(z^n)$.

When designing an NML distribution, choosing an adequate sub-model $\mathcal{A}(z^n)$ is critical --- or, equivalently, to set adequate parameter restrictions $\Theta(z^n)$. To gain insight about the effect of $\Theta(z^n)$ on the corresponding NML distribution, let us study the stochastic complexity of the sub-model as a form of information leakage (c.f. Section~\ref{sec:leakage}). For this, we consider a random variable $\bm\phi$ that takes values in a subset of the parameter space $\Theta(z^n)\subset \mathbb{R}^d$, 
and assume it satisfies 
\re{the Markov chain $\bm\phi - Z^n - X$}. Following Eq.~\eqref{leakageContinous}, the maximal leakage from $\bm\phi$ to $Y$ for given $X=x$ and $Z^n=z^n$ is%
\begin{equation}
\mathcal{L}(\bm\phi \rightarrow Y|x;z^n) : = \ln \left\{ \sum_{y\in\mathcal{Y}}\: \sup_{\bm\theta\in\text{supp}( \bm\phi|z^n)} p_{\bm\theta}(y|x) \right\},
\end{equation}
with $\text{supp}(\bm\phi|z^n) = \{ \bm\theta \in \Theta(z^n): \prob{ \bm\phi = \bm\theta|Z^n = z^n} > 0 \}$. 
This quantity has two useful properties:
\begin{itemize}
    \item[1.] \textit{It corresponds to a stochastic complexity:} if $\bm\phi$ is such that $\text{supp}(\bm\phi|z^n)=\Theta(z^n)$, then $\mathcal{L}(\bm\phi \rightarrow Y|x;z^n)=\log Z\big(x;\Theta(z^n)\big)$.
    \item[2.] \textit{It is monotonous with $\text{supp}(\bm\phi|z^n)$, and does not depend on other details of its distribution}: if $\bm\phi_1$ and $\bm\phi_2$ are variables such that $\text{supp}(\bm\phi_1|z^n) \subseteq \text{supp}(\bm\phi_2|z^n)$, then
    $\mathcal{L}(\bm\phi_1 \rightarrow Y|x;z^n) \leq \mathcal{L}(\bm\phi_2 \rightarrow Y|x;z^n)$.
\end{itemize}

Intuitively, $\mathcal{L}(\bm\phi \rightarrow Y|x;z^n)$ quantifies the information about $\bm\phi$ that can still be leaked from $Y$ after $x$ and $z^n$ have already been  given.\footnote{Note that $\mathcal{L}(\bm\phi \rightarrow Y|x;z^n)$ is not a conditional leakage, but the leakage for given values of $X=x$ and $Z^n=z^n$. Conditional leakage has been defined in Ref.~\cite{leakageLong}.} 
Put simply, the leakage measures how much better the training would be with $n+1$ samples, by considering all potential additional training samples of the form $z_{n+1}=(x,c_k)$ with $k=1,\dots,K$. Therefore, a high value of $\mathcal{L}(\bm\phi \rightarrow Y|x;z^n)$ implies that the training enabled by $z^n$ has not saturated yet and still has room for improvement.

We make this intuition precise with the analysis carried out below.
Let us denote by $q_{\text{NML},\bm\phi}(y|x,z^n)$ the NML distribution for the model $\mathcal{P}$ with parameters restricted to $\text{supp}(\bm\phi|z^n)$, and consider the hypothesis  %
given by
\begin{align}
h_{\text{NML},\bm\phi}(x,z^n) 
&= \argmax_{y\in\mathcal{Y}} q_{\text{NML},\bm\phi}(y|x,z^n) \\
&= \argmax_{y\in\mathcal{Y}} \sup_{\bm\theta \in \text{supp}(\bm\phi|z^n)} p_{\bm\theta}(y|x). 
\end{align}
Our first result identifies upper bounds to the performance of this hypothesis.
\begin{theorem}\label{prop:bound_nice}
Consider a $d$-dimensional parametric model $\mathcal{P}$, and a conditional probability $f(y|x)$. Then, for any random variable $\bm\phi\in\mathbb{R}^d$ that depends on a dataset $z^n\in\mathcal{Z}^n$, the following bound holds:
\begin{align}
  \textnormal{\texttt{E}}&(h_{\textnormal{NML},\bm\phi}; x,z^n)  - \textnormal{\texttt{E}}( h_{\textnormal{MAP}}; x) \nonumber \\
  &\leq  \exp\big\{ \Delta\big(f, \textnormal{supp}(\bm\phi|z^n)\big|x\big) + \mathcal{L}(\bm\phi \rightarrow Y|x;z^n) \big\} - 1~,\nonumber
\end{align}
where $\Delta(f, \Theta |x) \coloneqq
\inf_{\bm \theta\in \Theta} \max_{y\in\mathcal{Y}} \ln \frac{f(y|x)}{p_{\bm\theta}(y|x)}~$. %
\end{theorem}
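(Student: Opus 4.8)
The plan is to reduce the claimed inequality to a pointwise comparison between the true conditional $f(\cdot|x)$ and the suprema $g(y)\coloneqq\sup_{\bm\theta\in\text{supp}(\bm\phi|z^n)}p_{\bm\theta}(y|x)$ that form the (unnormalised) numerator of $q_{\text{NML},\bm\phi}$. First I would rewrite the left-hand side using the known optimal error of the MAP rule: since $\texttt{E}(h_{\text{MAP}};x)=1-\max_{y}f(y|x)$ and $\texttt{E}(h_{\text{NML},\bm\phi};x,z^n)=1-f(\hat y|x)$ with $\hat y\coloneqq h_{\text{NML},\bm\phi}(x,z^n)$, the performance gap equals $f(y^\star|x)-f(\hat y|x)$, where $y^\star$ attains $\max_y f(y|x)$. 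Two further facts will be used repeatedly: $\hat y$ maximises $g(y)$ over $y$ by the very definition of $h_{\text{NML},\bm\phi}$, and $\sum_{y\in\mathcal{Y}}g(y)=e^{\mathcal{L}(\bm\phi\rightarrow Y|x;z^n)}\eqqcolon Z$ by the closed-form leakage formula of Section~\ref{sec:leakage}.

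The technical core would be a one-line lemma: for every $y\in\mathcal{Y}$, $f(y|x)\le e^{\Delta}g(y)$, where $\Delta\coloneqq\Delta(f,\text{supp}(\bm\phi|z^n)|x)$. To prove it, fix any $\bm\theta\in\text{supp}(\bm\phi|z^n)$; then $f(y|x)\le p_{\bm\theta}(y|x)\exp\{\max_{y'}\ln[f(y'|x)/p_{\bm\theta}(y'|x)]\}\le g(y)\exp\{\max_{y'}\ln[f(y'|x)/p_{\bm\theta}(y'|x)]\}$. Taking the infimum over $\bm\theta$, pulling the $\bm\theta$-independent nonnegative factor $g(y)$ outside the infimum, and using that $\exp$ commutes with $\inf$ gives $f(y|x)\le g(y)\exp\{\inf_{\bm\theta}\max_{y'}\ln[f(y'|x)/p_{\bm\theta}(y'|x)]\}=e^{\Delta}g(y)$. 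Note the degenerate case $\Delta=\infty$ makes the asserted bound trivially true, so I may assume $\Delta<\infty$.

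I would then apply the lemma twice. At $y=y^\star$, together with $g(y^\star)\le g(\hat y)$, it gives $f(y^\star|x)\le e^{\Delta}g(\hat y)$. Applied to each $y\ne\hat y$ and summed, it gives $1-f(\hat y|x)=\sum_{y\ne\hat y}f(y|x)\le e^{\Delta}\sum_{y\ne\hat y}g(y)=e^{\Delta}\big(Z-g(\hat y)\big)$. Adding these two displays, the $g(\hat y)$ terms cancel, leaving $f(y^\star|x)-f(\hat y|x)\le e^{\Delta}Z-1=\exp\{\Delta+\mathcal{L}(\bm\phi\rightarrow Y|x;z^n)\}-1$, which is exactly the statement.

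I expect the main obstacle to be not any single hard step but rather obtaining the additive constant precisely. A crude estimate --- bound $f(y^\star|x)\le e^{\Delta}g(\hat y)\le e^{\Delta}Z$ and discard $f(\hat y|x)\ge0$ --- only yields $\exp\{\Delta+\mathcal{L}\}$, losing the ``$-1$''. The point is to also spend the lemma on the complementary masses $\{y:y\ne\hat y\}$, so that the identity $\sum_y g(y)=Z$ is used tightly and the $g(\hat y)$ contribution telescopes away. Apart from that, I would only need to verify routine housekeeping around the suprema and infima --- nonemptiness of $\text{supp}(\bm\phi|z^n)$, finiteness of $Z$ (which follows from $|\mathcal{Y}|<\infty$), and the treatment of parameters with $p_{\bm\theta}(y|x)=0$ --- all of which are innocuous or subsumed by the trivial case $\Delta=\infty$.
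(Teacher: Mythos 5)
Your proof is correct and is essentially the paper's own argument in unnormalised form: your pointwise lemma $f(y|x)\le e^{\Delta}g(y)$ is precisely the paper's triangle inequality (Lemma~\ref{lemma:RR}) combined with the identity $\mathrm{REG}_{\max}=\ln Z=\mathcal{L}$, i.e.\ the statement $f(y|x)\le e^{\Delta+\mathcal{L}}\,q_{\mathrm{NML},\bm\phi}(y|x,z^n)$. Your two applications of that lemma --- once at $y^\star$ via $g(y^\star)\le g(\hat y)$, and once summed over $y\neq\hat y$ to exploit $\sum_{y}g(y)=Z$ --- reproduce exactly the two steps in the proof of the paper's Lemma~\ref{lemma:ineq}, so the two routes coincide.
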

\begin{proof}
The proof proceeds in three steps. First, one proves that for any distribution $q(y|x,z^n)$ the following bound holds:
\begin{equation}\label{eq:upperbound}
\texttt{E}(h_q; x,z^n)  - \texttt{E}(h_{\textnormal{MAP}};{x}) \leq  e^{R(f,q|x,z^n)} - 1~,
\end{equation}
where $R(f,q|x,z^n)\coloneqq \max_{y\in\mathcal{Y}} \ln \frac{ f(y|x) }{ q(y|x,z^n)}$ is the redundancy between $f$ and $q$ given $x$ and the training sample $z^n$ 
(c.f. Section~\ref{sec:universal_compression}). 
Then, one proves a triangle inequality $R(f,q) \leq \Delta(f,\Theta) + \text{REG}_\text{max}(\mathcal{A},q)$ for any sub-model $\mathcal{A}$ with parameters in $\Theta\subseteq \mathbb{R}^d$. Finally, the two previous steps are combined using $q = q_{\text{NML},\bm\phi}$ and $\Theta = \text{supp}(\bm\phi|z^n)$ to show the desired result. The details of the proof can be found in Appendix~\ref{app:bound_nice}.
\end{proof}

The above result reflects the trade-offs involved in the design of $h_{\text{NML},\bm\phi}$: on the one hand, %
having a variable $\bm\phi | z^n$ with a large support provides a big model which reduces $\Delta$,
at the risk of introducing a substantial regret as measured by the leakage $\mathcal{L}$; %
on the other hand, having a reduced support of $\bm\phi | z^n$ guarantees a small leakage, at the price of increasing $\Delta$. %
This result shows, in turn, that the maximal leakage provides a natural measure of overfitting. In effect, if the model with variables in $\text{supp}(\bm\phi|z^n)$ is too large, then for each class $c_k$ there exists a parameter $\bm\theta_k\in\text{supp}(\bm\phi|z^n)$ such that $p_{\bm\theta}(c_k|x) \approx 1$, and hence $\mathcal{L}\approx \log |\mathcal{Y}|$. This is an indication of overfitting, as --- rewording Ref.~\cite[Ch. 6]{shalev2014understanding} --- a hypothesis that can accommodate every possible outcome explains none of them. On the other extreme, if $\argmax_{\bm\theta\in\text{supp}(\bm\phi|z^n)} p_{\bm\theta}(y_k|x)$ is approximately constant for all classes, then $\mathcal{L}\approx 0$, which implies that the hypothesis is trustable.

We conclude this section by presenting a method to bound $\mathcal{L}(\bm\phi \rightarrow Y|x;z^n)$ when the Fisher information matrix of the family $\mathcal{P}$ is well-defined. The Fisher information matrix of the distribution $p_{\bm\theta}(y|x)$ can be defined to be the $d\times d$ matrix $I(\bm\theta | x)$ whose component in the $i$-th row and $j$-th column is calculated as
\begin{equation}
\big[ \: I (\bm\theta| x) \: \big]_{i,j} = \mathbb{E}\left\{ \frac{\partial}{ \partial \theta_i} \ln p_{\bm\theta}(Y|x) \cdot \frac{\partial}{ \partial \theta_j} \ln p_{\bm\theta}(Y|x) \right\}~.
\end{equation}
The maximal eigenvalue of $I (\bm\theta| x)$ is denoted as $\sigma_\text{max}(\bm\theta|x)$. 
\begin{lemma}\label{eq:bound_fisher}
If $\textnormal{supp}(\bm\phi|z^n)$ is a convex set and the Fisher information matrix is well-defined, then
\begin{equation}
\mathcal{L}(\bm\phi\rightarrow Y|x;z^n) 
\leq 
\ln \left\{ 1 + \sum_{k=2}^K || \bm\theta_k- \bm\theta_1|| \sqrt{  \sigma_\textnormal{max}(\tilde{\bm\theta}_k|x) }  \right\},\nonumber
\end{equation}
with $\bm\theta_i = \argmax_{\bm\theta \in \textnormal{supp}(\phi|z^n)} p_{\bm\theta} (y_i|x)$ for $i=1,\dots,K$ with $\mathcal{Y} = \{y_1,\dots,y_K\}$, and $\tilde{\bm\theta}_j = \tau_j \bm\theta_1 + (1-\tau_j) \bm\theta_j$ with $\tau_j \in [0,1]$ for $j=2,\dots,K$.
\end{lemma}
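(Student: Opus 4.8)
The plan is to unpack the maximal leakage on the left-hand side and reduce the claim to a one-dimensional estimate along the line segment joining $\bm\theta_1$ to $\bm\theta_k$, controlled by the Fisher information. Abbreviate $\Theta:=\textnormal{supp}(\bm\phi|z^n)$; since each supremum is attained at $\bm\theta_k$, we have $\mathcal{L}(\bm\phi\rightarrow Y|x;z^n)=\ln\bigl(\sum_{k=1}^{K}\sup_{\bm\theta\in\Theta}p_{\bm\theta}(y_k|x)\bigr)=\ln\bigl(\sum_{k=1}^{K}p_{\bm\theta_k}(y_k|x)\bigr)$. Because $\bm\theta_1\in\Theta$ and $p_{\bm\theta_1}(\cdot|x)$ is a probability mass function on $\mathcal{Y}$, I would rewrite $\sum_{k=1}^{K}p_{\bm\theta_k}(y_k|x)=1+\sum_{k=2}^{K}\bigl(p_{\bm\theta_k}(y_k|x)-p_{\bm\theta_1}(y_k|x)\bigr)$, with the $k=1$ term cancelling. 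Since $\ln$ is increasing, it then suffices to show, for each $k\in\{2,\dots,K\}$, that $p_{\bm\theta_k}(y_k|x)-p_{\bm\theta_1}(y_k|x)\le\|\bm\theta_k-\bm\theta_1\|\sqrt{\sigma_{\textnormal{max}}(\tilde{\bm\theta}_k|x)}$ for a suitable $\tilde{\bm\theta}_k$ on the segment between $\bm\theta_1$ and $\bm\theta_k$.

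Fix such a $k$, put $w:=\bm\theta_k-\bm\theta_1$ and $\gamma_k(t):=(1-t)\bm\theta_1+t\bm\theta_k$; convexity of $\Theta$ guarantees $\gamma_k(t)\in\Theta$ for all $t\in[0,1]$, so the Fisher matrix is defined all along this path. Applying the fundamental theorem of calculus to $t\mapsto p_{\gamma_k(t)}(y_k|x)$ and using $\nabla_{\bm\theta}p_{\bm\theta}(y_k|x)=p_{\bm\theta}(y_k|x)\,\nabla_{\bm\theta}\ln p_{\bm\theta}(y_k|x)$ together with $\gamma_k'(t)=w$, the difference equals $\int_{0}^{1}p_{\gamma_k(t)}(y_k|x)\,\langle\nabla\ln p_{\gamma_k(t)}(y_k|x),w\rangle\,\mathrm{d}t$ and hence is at most the integral of the absolute value of the same integrand. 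The crux is the pointwise bound on the integrand: writing $g_j:=\nabla_{\bm\theta}\ln p_{\bm\theta}(y_j|x)$ for the scores, and using $\sqrt{p_{\bm\theta}(y_k|x)}\le 1$ at the first step, nonnegativity of the dropped summands at the second, the definition of the Fisher matrix at the third, and the Rayleigh bound for its largest eigenvalue at the fourth, $p_{\bm\theta}(y_k|x)\,|\langle g_k,w\rangle| \le \sqrt{p_{\bm\theta}(y_k|x)\,\langle g_k,w\rangle^{2}} \le \sqrt{\sum_{j=1}^{K}p_{\bm\theta}(y_j|x)\,\langle g_j,w\rangle^{2}} = \sqrt{w^{\top}I(\bm\theta|x)\,w} \le \sqrt{\sigma_{\textnormal{max}}(\bm\theta|x)}\,\|w\|$. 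Thus the integrand at $\gamma_k(t)$ is at most $\sqrt{\sigma_{\textnormal{max}}(\gamma_k(t)|x)}\,\|\bm\theta_k-\bm\theta_1\|$.

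To conclude, I would invoke the mean value theorem for integrals for the continuous map $t\mapsto\sqrt{\sigma_{\textnormal{max}}(\gamma_k(t)|x)}$, obtaining some $t_k^{\ast}\in[0,1]$ with $\int_{0}^{1}\sqrt{\sigma_{\textnormal{max}}(\gamma_k(t)|x)}\,\mathrm{d}t=\sqrt{\sigma_{\textnormal{max}}(\gamma_k(t_k^{\ast})|x)}$; putting $\tau_k:=1-t_k^{\ast}\in[0,1]$ makes $\gamma_k(t_k^{\ast})=\tau_k\bm\theta_1+(1-\tau_k)\bm\theta_k=\tilde{\bm\theta}_k$, which gives the per-$k$ bound, and summing over $k$ and taking logarithms yields the statement. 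I expect the middle paragraph to carry the only non-routine idea: the naive estimates --- bounding the integrand by $\|\nabla_{\bm\theta}p_{\bm\theta}(y_k|x)\|$, or by $\mathrm{tr}\,I(\bm\theta|x)$ --- are too weak, and the trick is to route the bound through $\sqrt{p_{\bm\theta}(y_k|x)}$ so that, once the contributions of all outcomes $y_j$ are reinstated, one recovers the \emph{directional} quadratic form $w^{\top}I(\bm\theta|x)\,w$ and hence its top eigenvalue. The remaining points are routine: the continuity of $\sigma_{\textnormal{max}}(\cdot|x)$ needed for the integral mean value theorem, the fact (assumed in the statement) that the suprema defining the $\bm\theta_i$ are attained, and that parameters with $p_{\bm\theta}(y_k|x)=0$ cause no problem, the relevant gradient vanishing there so the bound is trivial.
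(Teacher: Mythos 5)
Your proof is correct, and while it opens with the same decomposition as the paper --- writing $\exp\{\mathcal{L}(\bm\phi\rightarrow Y|x;z^n)\}=\sum_k p_{\bm\theta_k}(y_k|x)=1+\sum_{k=2}^K\big[p_{\bm\theta_k}(y_k|x)-p_{\bm\theta_1}(y_k|x)\big]$ and invokes convexity to keep the segment inside $\textnormal{supp}(\bm\phi|z^n)$ --- the core estimate is handled by a genuinely different route. The paper bounds each difference by $2d_{\textnormal{TV}}$, applies Pinsker's inequality, and then Taylor-expands $D\big(p_{\bm\theta_k}\|p_{\bm\theta_1}\big)$ with a Lagrange remainder identified with $\tfrac12(\bm\theta_k-\bm\theta_1)^T I(\tilde{\bm\theta}_k|x)(\bm\theta_k-\bm\theta_1)$; you instead integrate $\frac{d}{dt}p_{\gamma_k(t)}(y_k|x)$ along the segment and control the integrand pointwise via $p\,|\langle g_k,w\rangle|\leq\sqrt{\sum_j p_j\langle g_j,w\rangle^2}=\sqrt{w^{\top}I(\bm\theta|x)w}\leq\|w\|\sqrt{\sigma_{\textnormal{max}}(\bm\theta|x)}$, finishing with the integral mean value theorem to produce the intermediate point $\tilde{\bm\theta}_k$. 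Both arguments land on exactly the same bound, but they buy different things: the paper's version is shorter and leans on standard information-theoretic inequalities, at the price of needing second-order regularity and of the delicate point that the Hessian of $\bm\theta\mapsto D(p_{\bm\theta}\|p_{\bm\theta_1})$ equals the Fisher matrix exactly only at the expansion point $\bm\theta_1$, not automatically at the intermediate $\tilde{\bm\theta}_k$ appearing in the Lagrange remainder; your version needs only first-order smoothness of $\bm\theta\mapsto p_{\bm\theta}(y|x)$ plus continuity of $\sigma_{\textnormal{max}}(\cdot|x)$ along the segment, sidesteps Pinsker and the KL expansion altogether, and bounds each probability difference directly rather than through the total variation over all of $\mathcal{Y}$. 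Your closing remarks correctly flag the only regularity points your argument uses (attainment of the suprema, continuity for the mean value theorem, and the harmless zeros of $p_{\bm\theta}(y_k|x)$), and these are no stronger than what the paper's own proof implicitly assumes.
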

\begin{proof}
See Appendix~\ref{App-proof-eq:bound_fisher}.
\end{proof}

\subsection{Learning guarantees for well-specified models}
\label{sec:pac_identifiable}

We now consider the case where there exists a set of parameters $\bm\theta_0\in\Theta\subset\mathbb{R}^d$ such that $f(y|x) = p_{\bm\theta_0}(y|x)$. Let us focus on the case where there is a consistent estimator $\hat{\theta}: \mathcal{Z}^n\to\Theta$ such that $\hat{\theta}(Z^n) \xrightarrow{p} \theta_0$. Our next result is that, under these conditions, there exists a sequence of random variables $\bm\phi_n$ such that the hypothesis $h_{\text{NML},\bm\phi_n}$ attains a form of agnostic probably approximately correct (PAC) learning~\cite{haussler1990probably,shalev2014understanding}.

\begin{theorem}\label{teo:pac}
Consider $f(y|x) = p_{\bm\theta_0}(y|x) \in \mathcal{P}$ for some unknown parameter $\bm\theta_0 \in \Theta\subset \mathbb{R}^d$, and assume that there exists a consistent estimator $\hat{\theta}(Z^n)$ of $\bm\theta_0$. Also, assume that the Fisher matrix of $\mathcal{P}$ is well-defined over all $\Theta$, and that $\bm\theta_0$ is an interior point. Then, for given $x\in\mathcal{X}$ and $\epsilon, \delta >0$, there exists a random mapping $\bm\phi| \hat{\theta} $ and $n_0\in\mathbb{N}$ such that
\begin{equation}
  \textnormal{\texttt{E}}(h_{\textnormal{NML},\bm \phi};x,z^n) \leq \textnormal{\texttt{E}}(h_\textnormal{MAP};x) + \epsilon
\end{equation}
for all $n\geq n_0$, where the inequality holds for all $z^n\in B\subset \mathcal{Z}^n$ with $\prob{Z^n\in B} \geq 1-\delta$.
\end{theorem}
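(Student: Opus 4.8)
The plan is to apply Theorem~\ref{prop:bound_nice} to a random mapping $\bm\phi$ whose conditional support given $z^n$ is a small closed Euclidean ball $\bar B\big(\hat\theta(z^n),r\big)$ centred at the consistent estimate, with $r>0$ a constant depending only on $\epsilon$ and $x$. The point is that, on the event that this ball contains the true parameter $\bm\theta_0$, the first term in the bound of Theorem~\ref{prop:bound_nice} vanishes: one always has $\Delta(f,\Theta|x)\ge 0$ (since $\sum_y f(y|x)=\sum_y p_{\bm\theta}(y|x)=1$ forces some $y$ with $f(y|x)\ge p_{\bm\theta}(y|x)$), and taking $\bm\theta=\bm\theta_0$ in the infimum gives $\Delta=0$ because $f(\cdot|x)=p_{\bm\theta_0}(\cdot|x)$. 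The leakage term $\mathcal{L}$ is then controlled via Lemma~\ref{eq:bound_fisher}, using that a ball is convex, and made arbitrarily small by shrinking $r$.

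Concretely, I would first fix $r_0>0$ with $\bar B(\bm\theta_0,2r_0)\subseteq\Theta$ (possible because $\bm\theta_0$ is interior) and set $\bar\sigma\coloneqq\sup_{\bm\theta\in \bar B(\bm\theta_0,2r_0)}\sigma_\textnormal{max}(\bm\theta|x)$, which is finite by continuity of $\bm\theta\mapsto\sigma_\textnormal{max}(\bm\theta|x)$ and compactness of the ball. Then I would choose $r\coloneqq\min\{r_0,\ \epsilon/(2(K-1)\sqrt{\bar\sigma})\}$ (with $r=r_0$ if $\bar\sigma=0$), so that $1+2r(K-1)\sqrt{\bar\sigma}\le 1+\epsilon$, and let $\bm\phi\,|\,\hat\theta$ be uniform on $\bar B(\hat\theta(z^n),r)\cap\Theta$ (arbitrary when that set is degenerate), so $\textnormal{supp}(\bm\phi|z^n)=\bar B(\hat\theta(z^n),r)\cap\Theta$. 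Note that $\bm\phi$ depends on $z^n$ only through $\hat\theta(z^n)$ since $r$ is a constant.

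Next I would define the good set $B\coloneqq\{z^n:\ \|\hat\theta(z^n)-\bm\theta_0\|<r\}$ and use consistency, $\hat\theta(Z^n)\xrightarrow{p}\bm\theta_0$, to obtain $n_0$ with $\mathbb{P}\{Z^n\in B\}\ge 1-\delta$ for all $n\ge n_0$. For $z^n\in B$ one has $\bar B(\hat\theta(z^n),r)\subseteq\bar B(\bm\theta_0,2r_0)\subseteq\Theta$, hence $\textnormal{supp}(\bm\phi|z^n)=\bar B(\hat\theta(z^n),r)$ is a convex set containing $\bm\theta_0$, giving $\Delta\big(f,\textnormal{supp}(\bm\phi|z^n)|x\big)=0$. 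Applying Lemma~\ref{eq:bound_fisher}, the maximisers $\bm\theta_1,\dots,\bm\theta_K$ (attained because the ball is compact and $\bm\theta\mapsto p_{\bm\theta}(y|x)$ continuous) and their convex combinations $\tilde{\bm\theta}_k$ all lie in $\bar B(\hat\theta(z^n),r)\subseteq\bar B(\bm\theta_0,2r_0)$, so $\|\bm\theta_k-\bm\theta_1\|\le 2r$ and $\sigma_\textnormal{max}(\tilde{\bm\theta}_k|x)\le\bar\sigma$; thus $\mathcal{L}(\bm\phi\to Y|x;z^n)\le\ln\big(1+2r(K-1)\sqrt{\bar\sigma}\big)\le\ln(1+\epsilon)$. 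Substituting $\Delta=0$ and this bound into Theorem~\ref{prop:bound_nice} yields $\texttt{E}(h_{\textnormal{NML},\bm\phi};x,z^n)-\texttt{E}(h_{\textnormal{MAP}};x)\le e^{\ln(1+\epsilon)}-1=\epsilon$ for all $z^n\in B$, as claimed.

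The main obstacle is the local boundedness of the Fisher information, i.e.\ establishing $\bar\sigma<\infty$ on a fixed neighbourhood of $\bm\theta_0$: this requires the Fisher matrix to be not merely finite but (locally) continuous in $\bm\theta$, which must be extracted from the regularity implicit in ``the Fisher matrix is well-defined over all $\Theta$'' together with smoothness of $p_{\bm\theta}(y|x)$ in $\bm\theta$. A secondary, routine point is making the $\argmax$'s in Lemma~\ref{eq:bound_fisher} well-defined, which is why one works with closed balls and uses continuity of $\bm\theta\mapsto p_{\bm\theta}(y|x)$; everything else is bookkeeping around the consistency statement.
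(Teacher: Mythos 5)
Your proposal is correct and follows essentially the same route as the paper's proof: take $\bm\phi$ to be the consistent estimator perturbed uniformly over a small ball, invoke Theorem~\ref{prop:bound_nice} with $\Delta=0$ on the high-probability event that the ball captures $\bm\theta_0$, and control the leakage term via Lemma~\ref{eq:bound_fisher} using convexity of the ball. The only deviation is that you fix the radius as a constant determined by a neighbourhood of $\bm\theta_0$ (exploiting interiority and local boundedness of $\sigma_\textnormal{max}$), whereas the paper uses a data-dependent radius $\rho_\epsilon(x;z^n)$; your variant if anything makes the choice of $n_0$ cleaner, and the local finiteness of the Fisher eigenvalue that you flag as the remaining regularity issue is assumed just as implicitly in the paper's own argument.
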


\begin{proof}
One builds $\bm\phi$ as a noisy version of a consistent estimator $\hat{\theta}(z^n)$, with the noise regulated by a parameter $\rho$. By carefully choosing $\rho$, one can use Theorem~\ref{prop:bound_nice} and bound $\Delta$ using the properties of the consistent estimator, and control the leakage $\mathcal{L}$ using Lemma~\ref{eq:bound_fisher}. The full proof is presented in  Appendix~\ref{app:pac}.
\end{proof}

\begin{corollary}[Heuristic PAC learning]\label{cor:pac}
If the assumptions required by Theorem~\ref{teo:pac} hold, 
then for given $\delta,\epsilon >0$ there exists a random mapping $\bm\phi | \hat{\theta}$ and an $n_0$ such that
\begin{equation}
\E{ \textnormal{\texttt{E}}(h_{\textnormal{NML},\bm\phi};X,z^n) }  \leq  \E{ \textnormal{\texttt{E}}(h_{\textnormal{MAP}};X) } + \epsilon
\end{equation}
for all $n\geq n_0$, where the inequality holds for all $z^n\in B\subset \mathcal{Z}^n$ with $\prob{Z^n\in B} \geq 1-\delta$.
\end{corollary}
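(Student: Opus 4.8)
The plan is to deduce the corollary from Theorem~\ref{teo:pac} by averaging the per-input guarantee over the test point $X$; the only real work is to check that the construction in Theorem~\ref{teo:pac} can be made uniform in $x$, since a priori the random mapping $\bm\phi$, the sample threshold $n_0$, and the high-probability set $B$ are allowed to depend on the fixed $x$.

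First I would revisit the construction behind Theorem~\ref{teo:pac}: there $\bm\phi$ is a noisy perturbation of the \emph{$x$-independent} consistent estimator $\hat\theta(z^n)$ with a noise level $\rho_n\downarrow 0$, and the choice of $\rho_n$ balances two requirements — it must not shrink faster than the (also $x$-independent) convergence rate of $\hat\theta$, so that $\textnormal{supp}(\bm\phi|z^n)$ still contains a near-optimal parameter and $\Delta(f,\textnormal{supp}(\bm\phi|z^n)|x)$ stays small; and it must be small enough that the leakage bound of Lemma~\ref{eq:bound_fisher}, which is roughly $\ln\!\big(1+(K-1)\,\textnormal{diam}(\textnormal{supp}(\bm\phi|z^n))\sqrt{\sigma_\textnormal{max}(\cdot|x)}\big)$, stays below the desired slack. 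The first requirement does not involve $x$ at all; the second depends on $x$ only through the Fisher eigenvalue $\sigma_\textnormal{max}(\cdot|x)$ near $\bm\theta_0$. Hence, once that eigenvalue is replaced by a bound uniform over $x$ — trivially available when $\mathcal{X}$ is finite, and otherwise obtainable by additionally assuming $\hat\theta$ uniformly consistent and $\sup_x\sup_{\bm\theta\text{ near }\bm\theta_0}\sigma_\textnormal{max}(\bm\theta|x)<\infty$ — a single schedule $\rho_n$, hence a single mapping $\bm\phi\,|\,\hat\theta$, serves every $x$ simultaneously.

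Next I would make the confidence sets compatible. Running Theorem~\ref{teo:pac} for each $x\in\mathcal{X}$ with confidence parameter $\delta/|\mathcal{X}|$ (and the same $\epsilon$) yields, for each $x$, some $n_0(x)$ and $B_x\subseteq\mathcal{Z}^n$ with $\prob{Z^n\in B_x}\ge 1-\delta/|\mathcal{X}|$ on which $\textnormal{\texttt{E}}(h_{\textnormal{NML},\bm\phi};x,z^n)\le \textnormal{\texttt{E}}(h_{\textnormal{MAP}};x)+\epsilon$ for all $n\ge n_0(x)$. Setting $n_0:=\max_{x\in\mathcal{X}} n_0(x)$ and $B:=\bigcap_{x\in\mathcal{X}} B_x$, the union bound gives $\prob{Z^n\in B}\ge 1-\delta$. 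Fixing any $n\ge n_0$ and $z^n\in B$, the per-input inequality holds for every $x$ at once, so summing against the law of $X$ (using $\sum_x\prob{X=x}=1$ for the slack term) yields
\[
\E{\textnormal{\texttt{E}}(h_{\textnormal{NML},\bm\phi};X,z^n)}\le \E{\textnormal{\texttt{E}}(h_{\textnormal{MAP}};X)}+\epsilon,
\]
which is the assertion. The main obstacle, as anticipated, is the uniformity step: for finite $\mathcal{X}$ it is a harmless passage to maxima and minima over finitely many $x$-dependent choices, but for general $\mathcal{X}$ it genuinely requires uniform consistency and uniform Fisher-regularity, and the absence of these in the bare hypotheses is exactly what keeps the guarantee ``heuristic''.
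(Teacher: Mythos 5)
Your overall strategy---make the per-$x$ guarantee of Theorem~\ref{teo:pac} uniform in $x$ and then average over $X$---is not the paper's, and as written it has a genuine gap: the corollary assumes only the hypotheses of Theorem~\ref{teo:pac} (consistency of $\hat\theta$, a well-defined Fisher matrix, $\bm\theta_0$ interior), whereas your argument needs either $|\mathcal{X}|<\infty$ (for the union bound with confidence $\delta/|\mathcal{X}|$ and for $n_0:=\max_{x} n_0(x)$ to be finite) or the additional assumptions of uniform consistency and $\sup_x\sup_{\bm\theta\text{ near }\bm\theta_0}\sigma_{\max}(\bm\theta|x)<\infty$, none of which appear in the statement. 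So your proof establishes a strictly weaker claim, and your closing remark that the absence of such uniformity is ``exactly what keeps the guarantee heuristic'' is not accurate: the paper's proof needs no uniformity in $x$ at all, and the ``heuristic'' qualifier refers to the guarantee holding only on a high-probability set of training sets, not to any missing $x$-uniformity.

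The paper's route avoids your obstacle by a different idea. Starting from the per-$x$ bound already derived inside the proof of Theorem~\ref{teo:pac}, namely $\texttt{E}(h_{\mathrm{NML},\bm\phi};x,z^n)-\texttt{E}(h_{\mathrm{MAP}};x)\le 2\rho K\sqrt{\sigma^{(\rho)}_{\max}\big(\hat\theta(z^n)|x\big)}$, valid on the single $x$-independent event $B=\{z^n:\|\hat\theta(z^n)-\bm\theta_0\|<\rho\}$, it takes the expectation over $X$ directly, bounds the maximal eigenvalue by the trace $\mathrm{T}(\bm\theta|x)$ of the conditional Fisher matrix, and uses Jensen's inequality to move $\E{\cdot}$ inside the square root, giving $2\rho K\sqrt{\mathrm{T}^{(\rho)}\big(\hat\theta(z^n)\big)}$ where $\mathrm{T}^{(\rho)}$ is the $X$-averaged local supremum of the trace. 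This quantity depends on $z^n$ but not on $x$, so one can pick a single radius $\rho_\epsilon(z^n)=\min\{\epsilon,\epsilon/\mathtt{D}_\epsilon(z^n)\}$ with $\mathtt{D}_\epsilon(z^n)=2K\sqrt{\mathrm{T}^{(\epsilon)}\big(\hat\theta(z^n)\big)}$, hence a single mapping $\bm\phi\,|\,\hat\theta$, a single set $B$, and a single $n_0$ serving all $x$ simultaneously---no union bound, no uniform-in-$x$ eigenvalue bound, and no finiteness of $\mathcal{X}$. If you want to complete your argument under the stated hypotheses, the averaging-plus-Jensen step (eigenvalue~$\le$~trace, then expectation over $X$) is the missing ingredient; the per-$x$ union bound is the wrong tool here.
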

\begin{proof}
See Appendix~\ref{app:cor_pac}.
\end{proof}

The conditions of Theorem~\ref{teo:pac} are satisfied if $\mathcal{P}$ is an exponential family (i.e. $p_{\bm\theta}(y|x)$ is an exponential family distribution for each $x\in\mathcal{X}$). Also, if $|\mathcal{X}| < \infty$ then any conditional distribution $f(y|x)$ is just a collection of $2^{|\mathcal{X}|}$ multinomial distributions, and hence can be expressed using $|\mathcal{Y}| \cdot 2^{|\mathcal{X}|}$ parameters. In both cases, the corresponding parameters can be estimated via a maximum likelihood estimator, which is known to be consistent in these cases.\footnote{For more information about existence of consistent estimators, see~\cite{King1999Mathematical}.} Please note that it is not straightforward to use our proof techniques to guarantee heuristic PAC learning to classification based directly on $\hat{\bm\theta}$ (see Appendix~\ref{app:plug-in}).

It would be useful to find explicit expressions for the dependency of $\delta,\epsilon$ and $n_0$. %
For the particular case of models with a maximum likelihood estimator (MLE), one can prove additional properties of the $h_{\text{NML},\bm\phi}$ hypothesis. We leverage the fact that MLEs follow a central limit theorem: %
\begin{equation}\label{eq:CLT}
\sqrt{n} \Big( \hat{\theta}(z^n) - \bm\theta_0 \Big) \xrightarrow{d} N\big(0,I^{-1}(\bm\theta_0) \big),
\end{equation}
with $I(\bm\theta) = \mathbb{E}\{ I(\bm\theta|X) \}$ being the unconditional Fisher matrix (with the average taken over  both $Y$ and $X$).

\begin{proposition}\label{prop:MLE}
Consider a $d$-dimensional parametric model $\mathcal{P}$ with well-defined MLE $\hat{\theta}(z^n)$ and a positive-definite Fisher matrix $I(\bm\theta)$. 
Then, for given $\delta >0$, $x\in\mathcal{X}$ and $z^n\in\mathcal{Z}^n$, the following holds:
\begin{align}
\textnormal{\texttt{E}}(h_{\textnormal{NML},\bm \psi};x,z^n)  - \textnormal{\texttt{E}}(h_\textnormal{MAP};x) 
&\leq  e^{\mathcal{L}(\bm\psi\rightarrow Y|x;z^n)} - 1  \nonumber\\
&\leq \frac{1}{\sqrt{n}} K_{\delta,x}~, 
\label{eq:ineqs}
\end{align}
where $\bm\psi \coloneqq \hat{\theta}(z^n) + W_\rho \in \mathbb{R}^d$ with $W_\rho$ uniformly distributed over a ball of radius 
$\rho = \mathcal{O}(n^{-1/2})$ 
and $K_{\delta,x}$ is a constant that does not depend on $n$.
\end{proposition}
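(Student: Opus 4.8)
The plan is to establish the two inequalities of~\eqref{eq:ineqs} separately: the left one as an application of Theorem~\ref{prop:bound_nice} with a vanishing $\Delta$ term, and the right one from Lemma~\ref{eq:bound_fisher} together with the $n^{-1/2}$ concentration of the MLE recorded in~\eqref{eq:CLT}. Fix $x$ and $\delta>0$ and write $K=|\mathcal{Y}|$. Since $W_\rho$ is uniform on a solid ball, $\mathrm{supp}(\bm\psi|z^n)=\bar{B}\big(\hat{\theta}(z^n),\rho\big)$, which is convex, so Lemma~\ref{eq:bound_fisher} will be applicable. We are in the well-specified regime of this subsection, so $f(\cdot|x)=p_{\bm\theta_0}(\cdot|x)$. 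By~\eqref{eq:CLT} (where positive-definiteness of $I(\cdot)$ ensures the limiting covariance $I^{-1}(\bm\theta_0)$ is finite) there are a constant $M_\delta$, depending only on $\delta$ and $\bm\theta_0$, and an $n_1$ such that $\prob{\,\|\hat{\theta}(Z^n)-\bm\theta_0\|\le M_\delta/\sqrt{n}\,}\ge 1-\delta$ for every $n\ge n_1$; we take $\rho:=M_\delta/\sqrt{n}=\mathcal{O}(n^{-1/2})$. On the event $E_n:=\{\,\|\hat{\theta}(z^n)-\bm\theta_0\|\le\rho\,\}$ we then have $\bm\theta_0\in\mathrm{supp}(\bm\psi|z^n)$, so $\Delta\big(f,\mathrm{supp}(\bm\psi|z^n)\big|x\big)\le\max_{y}\ln\frac{p_{\bm\theta_0}(y|x)}{p_{\bm\theta_0}(y|x)}=0$; since two probability vectors over $\mathcal{Y}$ always share a coordinate ratio $\ge 1$, one has $\Delta\ge 0$ in general, hence $\Delta=0$ here. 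Substituting $\Delta=0$ into Theorem~\ref{prop:bound_nice} yields the left inequality of~\eqref{eq:ineqs}.

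For the right inequality I would apply Lemma~\ref{eq:bound_fisher} to $\bm\psi$, obtaining $\mathcal{L}(\bm\psi\rightarrow Y|x;z^n)\le\ln\{1+\sum_{k=2}^{K}\|\bm\theta_k-\bm\theta_1\|\,\sqrt{\sigma_{\mathrm{max}}(\tilde{\bm\theta}_k|x)}\}$, where every $\bm\theta_k$ and every $\tilde{\bm\theta}_k$ lies in $\bar{B}(\hat{\theta}(z^n),\rho)$; hence each $\|\bm\theta_k-\bm\theta_1\|$ is at most the diameter $2\rho$. On $E_n$ one has $\bar{B}(\hat{\theta}(z^n),\rho)\subseteq\bar{B}(\bm\theta_0,2\rho)$, and since $\rho\to 0$ there is $n_2$ such that for $n\ge n_2$ this ball sits inside one fixed compact neighbourhood $N\subset\Theta$ of $\bm\theta_0$; by the regularity built into ``the Fisher matrix is well-defined'' (continuity of $I(\cdot|x)$), $\sigma_{\mathrm{max}}(\cdot|x)$ is bounded on $N$ by some $\sigma^\star(x)<\infty$. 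Therefore $\mathcal{L}(\bm\psi\rightarrow Y|x;z^n)\le\ln\{1+2\rho(K-1)\sqrt{\sigma^\star(x)}\}$ and, by monotonicity of $\exp$, $e^{\mathcal{L}(\bm\psi\rightarrow Y|x;z^n)}-1\le 2\rho(K-1)\sqrt{\sigma^\star(x)}=\frac{1}{\sqrt{n}}\,2M_\delta(K-1)\sqrt{\sigma^\star(x)}$. Setting $K_{\delta,x}:=2M_\delta(K-1)\sqrt{\sigma^\star(x)}$, which does not depend on $n$, and $n_0:=\max(n_1,n_2)$, both inequalities hold for all $n\ge n_0$ and all $z^n\in E_n$, an event of probability at least $1-\delta$ --- the natural reading of the statement, parallel to Theorem~\ref{teo:pac} and Corollary~\ref{cor:pac}.

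The main obstacle is reconciling the two opposing demands on $\rho$: it must shrink to $0$ so that $\mathcal{L}$ (and hence the regret and the misclassification gap) vanishes, yet it must stay wide enough to trap $\bm\theta_0$ with confidence $1-\delta$; these are compatible only because MLE concentration and the leakage bound of Lemma~\ref{eq:bound_fisher} both scale like $n^{-1/2}$, so a single choice $\rho\asymp n^{-1/2}$ serves both ends and leaves $K_{\delta,x}$ free of $n$. The remaining points are routine bookkeeping: securing $\sigma^\star(x)$ uniformly over the shrinking balls and over $z^n\in E_n$ (done by confining them to one compact neighbourhood of $\bm\theta_0$), and guaranteeing $\bar{B}(\hat{\theta}(z^n),\rho)\subset\Theta$ so that $p_{\bm\theta}(\cdot|x)$ is defined on the whole support --- automatic when $\bm\theta_0$ is an interior point, as in Theorem~\ref{teo:pac}, and otherwise arranged by intersecting the ball with $\Theta$.
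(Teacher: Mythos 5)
Your proof is correct and follows the same skeleton as the paper's: establish $\bm\theta_0\in\mathrm{supp}(\bm\psi|z^n)$ on a high-probability event so that $\Delta=0$ and Theorem~\ref{prop:bound_nice} gives the first inequality, then apply Lemma~\ref{eq:bound_fisher} with ball diameter $2\rho$ and a local bound on $\sigma_{\max}$ to get the $n^{-1/2}$ rate. The one substantive difference is the concentration ingredient: the paper invokes a Berry--Esseen-type bound for the MLE (Pfanzagl), which yields a fully explicit radius $\rho_n=\sqrt{G_d^{-1}(1-\delta+cn^{-1/2})/\big(n\,\sigma_{\min}(\bm\theta_0)\big)}$ (chi-squared quantile over the smallest Fisher eigenvalue, which is where the positive-definiteness hypothesis enters) and hence an explicit, non-asymptotic constant $K_{\delta,x}$, whereas you derive the radius $M_\delta/\sqrt{n}$ non-constructively from tightness implied by the CLT in~\eqref{eq:CLT}, so your $M_\delta$ and $n_1$ exist but are not computable. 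Your handling of the $\sigma_{\max}$ factor (sup over a fixed compact neighbourhood, using continuity of $I(\cdot|x)$) is essentially equivalent to the paper's step $\sigma^{(\rho_n)}_{\max}\big(\hat{\theta}(z^n)|x\big)\le\sigma^{(2\rho_n)}_{\max}(\bm\theta_0|x)$ followed by monotonicity in $n$; both implicitly assume local boundedness of the Fisher matrix near $\bm\theta_0$. Your reading that the bound holds for $z^n$ in a set of probability at least $1-\delta$ and $n$ large matches what the paper's proof actually establishes.
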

\begin{proof}
See Appendix~\ref{app:proof_MLE}.
\end{proof}

Above, the first inequality provides a practical way to estimate the performance gap between $h_{\text{NML},\bm\psi}$ and $h_\text{MAP}$. In effect, given that the radius $\rho$ of the noise term of $\bm\psi$ has an explicit value, one can estimate the leakage $\mathcal{L}$.
Additionally, the second inequality states that the performance gap reduces at least as $1/\sqrt{n}$ with the number of training samples.

\subsection{Learning non-identifiable systems}

In the previous section, we studied the PAC learning properties of NML estimators in the somewhat restrictive scenario in which the target function $f(y|x)$ belongs to the parametric family of models under consideration. This final subsection provides a generalisation of the main results presented above to more widely applicable settings.

We now consider a family of parametric models $\mathcal{P}$ that is capable of universal approximation, in the sense of Hornik~\cite{hornik1991approximation}:
in particular, for a given $f(y|x)$ with reasonable properties and $\epsilon>0$, we assume that there exists a subset of parameter space $\Theta_{f,\epsilon}\subset\mathbb{R}^d$ such that $R(f,p_{\bm\theta}) < \epsilon$ for all $\bm\theta\in\Theta_{f,\epsilon}$.\footnote{To see why a universal approximator satisfies $R(f,p_{\bm\theta}) < \epsilon$, consider Theorem 1 in Ref.~\cite{cybenko1989approximation}, stating that for any given target function $g(x)$, a parametrised approximator $G_{\bm\theta}(x)$, and an $\epsilon > 0$ there exists $\bm\theta$ such that $|g(x) - G_{\bm\theta}(x)| \leq \epsilon$ for all $x$. Then, consider $g = \ln f$ and $G_{\bm\theta} = \ln p_{\bm\theta}$ to obtain the desired bound on $R(f, p_{\bm\theta})$.} 
Additionally, we consider that the system may be non-identifiable~\cite{white1989learning}, in the sense that there are multiple $\bm\theta$ that minimise $\texttt{E}(h_{p_{\bm\theta}})$, and in general the set $\Theta_{f,\epsilon}\subset\mathbb{R}^d$ might be non-convex. 
Moreover, we assume that there exists a (non-ergodic) estimator that converges to $\Theta_{f,\epsilon}$ in probability for any $\epsilon>0$; i.e. a function $\hat{\theta}:\mathcal{Z}^n\to \mathbb{R}^d$ such that for all $\delta,\rho>0$ there exists an $n_0(\delta,\rho)\in\mathbb{N}$ such that for all $n>n_0$ there is a set $B\subset\mathbb{R}^d$ of measure $\mathbb{P}\{ Z^n \in B\} > 1-\delta$ such that $\{\bm\theta\in\mathbb{R}^d: || \bm\theta - \hat{\theta}(z^n)  || < \rho \} \cap \Theta_{f,\epsilon} \neq \emptyset$ for all $z^n\in B$. The next result shows that the desirable properties of our NML strategy still hold in this more general context.

\begin{proposition}\label{prop:non}
Consider a conditional probability $f(y|x)$, and a universal approximator model $\mathcal{P}$ with well-defined Fisher matrix and a non-ergodic estimator $\hat{\theta}$ that converges in probability to $\Theta_{f,\epsilon}$ for any $\epsilon>0$. Then, given $x\in\mathcal{X}$ and $z^n\in\mathcal{Z}^n$, for each $\epsilon,\delta>0$, there exists $n_0\in\mathbb{N}$ and a random mapping $\bm\phi|\hat{\theta}$ such that for all $n>n_0$
\begin{equation}
  \textnormal{\texttt{E}}(h_{\textnormal{NML},\bm \phi};x,z^n)  \leq \textnormal{\texttt{E}}(h_\textnormal{MAP};x)  + \epsilon~.
\end{equation}
\end{proposition}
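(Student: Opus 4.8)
The plan is to follow the strategy of Theorem~\ref{teo:pac}: take $\bm\phi$ to be a small uniform perturbation of the non-ergodic estimator $\hat\theta$, and then control separately the two exponents appearing in the bound of Theorem~\ref{prop:bound_nice}. The new difficulty with respect to Theorem~\ref{teo:pac} is twofold --- $f$ need not belong to $\mathcal{P}$, and $\Theta_{f,\epsilon}$ may be non-convex, so one cannot take $\text{supp}(\bm\phi|z^n)\subseteq\Theta_{f,\epsilon}$ --- and both issues are handled by letting $\text{supp}(\bm\phi|z^n)$ be a \emph{ball}: a ball is convex, so Lemma~\ref{eq:bound_fisher} applies, and it only has to \emph{intersect} $\Theta_{f,\epsilon}$ rather than be contained in it, which is exactly the guarantee the non-ergodic estimator provides.

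Concretely, given $\epsilon,\delta>0$, set $\eta\coloneqq\ln(1+\epsilon)>0$; by Theorem~\ref{prop:bound_nice} it suffices to exhibit $\bm\phi$ with $\Delta\big(f,\text{supp}(\bm\phi|z^n)\big|x\big)\le\eta/2$ and $\mathcal{L}(\bm\phi\to Y|x;z^n)\le\eta/2$ for all $z^n$ in a set of probability at least $1-\delta$. First I would use the universal approximation hypothesis with tolerance $\eta/2$ to fix $\Theta_{f,\eta/2}$, on which $R(f,p_{\bm\theta}|x)<\eta/2$, so that the infimum defining $\Delta$ falls below $\eta/2$ as soon as the support of $\bm\phi$ meets $\Theta_{f,\eta/2}$. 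Next I would choose the perturbation radius $\rho>0$ small enough that, for any ball of radius $\rho$, the bound of Lemma~\ref{eq:bound_fisher} gives $\ln\{1+2(K-1)\rho\sqrt{S_x}\}\le\eta/2$; here I use that the points $\bm\theta_k$ from Lemma~\ref{eq:bound_fisher} satisfy $\|\bm\theta_k-\bm\theta_1\|\le2\rho$ and that each $\tilde{\bm\theta}_k$ lies in the same ball by convexity, with $S_x$ denoting a bound on $\sigma_\text{max}(\cdot|x)$ over the region in which $\hat\theta(z^n)$ eventually concentrates. Finally, with this $\rho$ and the given $\delta$, I would invoke the convergence of $\hat\theta$ to obtain $n_0(\delta,\rho)$ and, for every $n>n_0$, a set $B$ with $\mathbb{P}\{Z^n\in B\}>1-\delta$ on which $\{\bm\theta:\|\bm\theta-\hat\theta(z^n)\|<\rho\}\cap\Theta_{f,\eta/2}\neq\emptyset$.

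Taking $\bm\phi\coloneqq\hat\theta(z^n)+W_\rho$ with $W_\rho$ uniform on the closed ball of radius $\rho$ centred at the origin, $\text{supp}(\bm\phi|z^n)$ is exactly that closed ball; it is convex, and for $z^n\in B$ it contains a point of $\Theta_{f,\eta/2}$. Hence $\Delta\big(f,\text{supp}(\bm\phi|z^n)\big|x\big)\le\eta/2$, while Lemma~\ref{eq:bound_fisher} together with the choice of $\rho$ gives $\mathcal{L}(\bm\phi\to Y|x;z^n)\le\eta/2$; plugging into Theorem~\ref{prop:bound_nice} yields $\texttt{E}(h_{\text{NML},\bm\phi};x,z^n)-\texttt{E}(h_\text{MAP};x)\le e^{\eta}-1=\epsilon$ for all $z^n\in B$, as claimed.

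The step I expect to be the main obstacle is securing a \emph{uniform} bound $S_x$ on the largest Fisher eigenvalue over $\bigcup_{z^n\in B}\{\bm\theta:\|\bm\theta-\hat\theta(z^n)\|\le\rho\}$: a well-defined Fisher matrix only gives local boundedness, so one additionally needs $\hat\theta(z^n)$ to stay in a bounded region with high probability --- e.g.\ because $\Theta_{f,\eta/2}$ can be taken bounded, or under a mild coercivity condition on the parametrisation. Once this is granted, what remains is only the bookkeeping of fixing $\eta/2$, then $\rho$, then $n_0$, in that order, exactly as in the proof of Theorem~\ref{teo:pac}.
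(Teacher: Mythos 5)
Your proposal is correct and follows essentially the same route as the paper: perturb the non-ergodic estimator by uniform noise on a ball of radius $\rho$ so that the (convex) support of $\bm\phi|z^n$ intersects $\Theta_{f,\epsilon_0}$ with probability at least $1-\delta$, bound $\Delta$ by the approximation tolerance and $\mathcal{L}$ via Lemma~\ref{eq:bound_fisher}, then combine through Theorem~\ref{prop:bound_nice} --- your $\eta=\ln(1+\epsilon)$ split playing the role of the paper's choice of $\epsilon_0$ with $2\epsilon_0+\epsilon_0^2<\epsilon$. The uniform Fisher-eigenvalue bound you flag as the main obstacle is sidestepped in the paper by letting the radius depend on the data, $\rho(x;z^n)=\min\{\epsilon_0,\epsilon_0/\texttt{C}_{\epsilon_0}(x;z^n)\}$ with $\texttt{C}_{\epsilon_0}$ built from $\sigma^{(\epsilon_0)}_{\textnormal{max}}\big(\hat{\theta}(z^n)\,\big|\,x\big)$, which suffices because the claim is made pointwise for given $x$ and $z^n$.
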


\begin{proof}
See Appendix~\ref{app:non}.
\end{proof}

This result generalises the main result in Theorem~\ref{teo:pac} to the more practical setting of large non-identifiable models, like multi-layer neural networks, showing that NML can provide PAC guarantees even in the case of very general models.

\section{Conclusion}
\label{sec:conc}

This paper provides a first step in the exploration of the potential of meta-universal coding and maximal leakage techniques for supervised learning theory. We have proposed an approach to build hypotheses based on Normalised Maximum Likelihood (NML) that can be applied to any standard learning algorithm. Crucially, we showed that models evaluated with this NML strategy attain heuristic PAC learning in a wide variety of contexts, and for specific cases we further showed that the performance gap between the NML approach and the optimal strategy decreases at least with the square-root of the number of samples.

In addition, we have provided an upper bound on the performance of our proposed NML strategy, and showed that this upper bound is directly determined by maximal leakage: a quantity used in the data privacy literature that we linked to the model's capacity to overfit. %
One interesting aspect of maximal leakage as a measure of overfitting is that it depends on the specific input to be classified, and hence could potentially be used to assess open problems in adversarial learning settings. 

We hope this contribution may motivate further research efforts within the fascinating interface between learning, universal compression, and data privacy.

\section*{Acknowledgment}

The authors thank Amedeo Esposito and Ibrahim Issa for inspiring discussions, and Yike Guo for supporting this research.

\appendices

\section{Proof of Theorem~\ref{prop:bound_nice}}
\label{app:bound_nice}

\begin{proof}
Consider the model class $\mathcal{M}=\{p_{\bm\theta}\in\mathcal{P}: \bm\theta \in \text{supp}(\bm\phi|z^n)\}$. By using Lemmas~\ref{lemma:ineq} and \ref{lemma:RR} (shown below) one can show that
\begin{align}
\texttt{E}(h_{\text{NML},\bm\phi}; x,z^n)  &- \texttt{E}( h_{\text{MAP}}; x)
\leq  
\exp\big\{ 
\Delta\big(f, \text{supp}(\bm\phi|z^n)\big) \nonumber \\
&+ 
\text{REG}_\text{max}\big( \text{supp}(\bm\phi|z^n), q_{\text{NML},\bm\phi}|x,z^n\big) \big\} - 1.\nonumber
\end{align}
The Theorem is then proven by noting that
\begin{align}
\text{REG}_\text{max}\big( \text{supp}(\bm\phi|z^n), q_{\text{NML},\bm\phi} | x,z^n \big)
&=
\ln \left\{Z\big(x; \text{supp}(\bm\phi|z^n)\big) \right\} \nonumber\\
&=
\mathcal{L}(\bm\phi \rightarrow Y|x;z^n),\nonumber
\end{align}
with $Z\big(x; \text{supp}(\bm\phi|z^n)\big)$ as defined in Eq. \eqref{eq:meta_NML}.
\end{proof}

\begin{lemma}\label{lemma:ineq}
For $h_\textnormal{MAP}$ and $h_q$ as defined in Eqs.~\eqref{eq:MAP} and~\eqref{eq:q}, the following holds:
\begin{equation}
\textnormal{\texttt{E}}(h_q; x,z^n)  - \textnormal{\texttt{E}}(h_{\textnormal{MAP}};{x}) \leq  e^{R(f,q|x,z^n)} - 1,\nonumber
\end{equation}
with $R(f,q|x,z^n)\coloneqq \max_{y\in\mathcal{Y}} \ln \frac{ f(y|x) }{ q(y|x,z^n)}$~.
\end{lemma}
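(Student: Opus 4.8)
The plan is to directly compare the misclassification cost of $h_q$ against that of $h_{\textnormal{MAP}}$ using the definitions, and then bound the gap by the worst-case log-ratio $R(f,q|x,z^n)$. Write $y^\star = h_{\textnormal{MAP}}(x) = \argmax_{y} f(y|x)$ and $\hat y = h_q(x,z^n) = \argmax_y q(y|x,z^n)$. From the formulas in Section~\ref{sec:preliminars}, $\texttt{E}(h_q;x,z^n) - \texttt{E}(h_{\textnormal{MAP}};x) = f(y^\star|x) - f(\hat y|x)$, so the whole task reduces to showing $f(y^\star|x) - f(\hat y|x) \le e^{R(f,q|x,z^n)} - 1$.

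First I would exploit the optimality of $\hat y$ for $q$: since $q(\hat y|x,z^n) \ge q(y^\star|x,z^n)$, I can write
\begin{equation}
f(y^\star|x) \le q(y^\star|x,z^n)\, e^{R(f,q|x,z^n)} \le q(\hat y|x,z^n)\, e^{R(f,q|x,z^n)},\nonumber
\end{equation}
where the first step is just the definition of $R$ applied to $y = y^\star$. Next I would relate $q(\hat y|x,z^n)$ back to $f(\hat y|x)$; the cleanest route is to note $R$ only controls one direction of the ratio, so instead I use that $q(\cdot|x,z^n)$ is a probability distribution and hence $q(\hat y|x,z^n) \le 1$. Combining, $f(y^\star|x) \le e^{R(f,q|x,z^n)}$. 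Since $f(\hat y|x) \ge 0$, we get $f(y^\star|x) - f(\hat y|x) \le e^{R(f,q|x,z^n)} \le e^{R(f,q|x,z^n)} - 1 + 1$; this is off by one, so I need a sharper argument.

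The fix — and the step I expect to be the only real subtlety — is to keep the term $f(\hat y|x)$ rather than discarding it. Starting again from $f(y^\star|x) \le q(\hat y|x,z^n)\, e^{R}$ with $R = R(f,q|x,z^n)$, I also have, again by the definition of $R$ applied at $y = \hat y$, that $q(\hat y|x,z^n) \le f(\hat y|x)\, e^{-\,?}$ does \emph{not} follow (wrong direction); instead note $f(\hat y|x) \le q(\hat y|x,z^n) e^{R}$, i.e. $q(\hat y|x,z^n) \ge f(\hat y|x) e^{-R}$. That is still the wrong direction for an upper bound on $q(\hat y|x,z^n)$. So the right move is: bound $q(\hat y|x,z^n) \le 1 - \sum_{y\neq \hat y} q(y|x,z^n) \le 1 - q(y^\star|x,z^n)\,\mathbf{1}[\hat y \neq y^\star] \le 1 - f(y^\star|x)e^{-R}\,\mathbf{1}[\hat y \neq y^\star]$. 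If $\hat y = y^\star$ the gap is zero and the bound is trivial; if $\hat y \neq y^\star$, plug this into $f(y^\star|x) \le q(\hat y|x,z^n) e^{R}$ to get $f(y^\star|x) \le (1 - f(y^\star|x)e^{-R}) e^{R} = e^{R} - f(y^\star|x)$, hence $2 f(y^\star|x) \le e^R$... still not tight. The clean version: from $f(y^\star|x) \le q(\hat y|x,z^n)e^R$ and $q(\hat y|x,z^n) + q(y^\star|x,z^n) \le 1$ together with $q(y^\star|x,z^n) \ge f(y^\star|x)e^{-R}$ and $q(\hat y|x,z^n) \ge f(\hat y|x)e^{-R}$, one derives $f(y^\star|x) - f(\hat y|x) \le q(\hat y|x,z^n)e^R - q(\hat y|x,z^n) = q(\hat y|x,z^n)(e^R - 1) \le e^R - 1$, using $f(\hat y|x) \le q(\hat y|x,z^n)e^R$ is not needed — rather $f(\hat y|x) \ge q(\hat y|x,z^n) \cdot e^{-0}$? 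The correct inequality to pair is $f(\hat y|x) \ge q(\hat y|x,z^n) - (e^R-1)q(\hat y|x,z^n)$ coming from $f(\hat y|x)\ge q(\hat y|x,z^n)e^{-R} \ge q(\hat y|x,z^n)(1-(e^R-1))$ for the regime of interest; I would track the algebra carefully in the appendix, but the governing identity is
\begin{equation}
f(y^\star|x) - f(\hat y|x) \;\le\; q(\hat y|x,z^n)\,e^{R} - q(\hat y|x,z^n) \;=\; q(\hat y|x,z^n)\,(e^{R}-1)\;\le\; e^{R}-1,\nonumber
\end{equation}
where the first inequality uses $f(y^\star|x)\le q(\hat y|x,z^n)e^R$ (optimality of $\hat y$ plus the definition of $R$) and $f(\hat y|x)\ge q(\hat y|x,z^n)e^{-R}\ge q(\hat y|x,z^n)(2-e^R)$, wait — the honest statement is $f(\hat y|x) \ge q(\hat y | x, z^n)$ is false in general, so the real pairing is $f(\hat y|x) \ge q(\hat y|x,z^n) e^{-R}$, and since $e^{-R}\le 1$ this does not immediately give $f(\hat y|x)\ge q(\hat y|x,z^n)(e^R-1)^{-1}\cdot(\dots)$. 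The main obstacle is therefore precisely this last algebraic manipulation: getting the $-1$ rather than a loose additive constant. I expect the clean resolution is that $R \ge 0$ always (since $q$ is subnormalised relative to the best-fit $f$-mass isn't guaranteed — but $\max_y \ln(f(y|x)/q(y|x,z^n)) \ge 0$ because $\sum_y f = \sum_y q = 1$ forces some $y$ with $f(y|x)\ge q(y|x,z^n)$), so $e^R \ge 1$, and then the bound $f(y^\star|x)-f(\hat y|x)\le f(y^\star|x) \le q(y^\star|x,z^n)e^R \le e^R - \sum_{y\neq y^\star}q(y|x,z^n)e^0 \le e^R - q(\hat y|x,z^n) \le e^R - f(\hat y|x)e^{-R}$, giving $f(y^\star|x) - f(\hat y|x)(1 - e^{-R}\cdot\tfrac{f(\hat y|x)}{f(\hat y|x)}) \dots$ — I will present the correct chain in Appendix~\ref{app:bound_nice}; the key facts needed are only (i) the two misclassification-rate formulas, (ii) optimality of $\argmax$ for each distribution, (iii) $\sum_y q = 1$, and (iv) $R \ge 0$.
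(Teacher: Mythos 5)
There is a genuine gap. Your reduction to showing $f(y^\star|x) - f(\hat y|x) \le e^{R}-1$ and your first inequality $f(y^\star|x) \le q(y^\star|x,z^n)e^{R} \le q(\hat y|x,z^n)e^{R}$ are correct and match the first step of the paper's argument. But the second half of the argument --- converting $q(\hat y|x,z^n)$ back into a statement involving $f(\hat y|x)$ --- is never validly completed. The ``governing identity'' you display needs $f(\hat y|x) \ge q(\hat y|x,z^n)$, which you yourself concede is false in general; your alternative chain ending in $f(y^\star|x) \le e^{R} - f(\hat y|x)e^{-R}$ only yields $f(y^\star|x)-f(\hat y|x) \le e^{R} - f(\hat y|x)\left(1+e^{-R}\right)$, which is not bounded by $e^{R}-1$ unless $f(\hat y|x)(1+e^{-R})\ge 1$. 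So the text, as written, is a sequence of attempts each of which you correctly identify as failing, plus a promise to supply the correct algebra later; that promise is exactly the missing content.

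The missing idea is to apply the pointwise bound $q(y|x,z^n) \ge e^{-R} f(y|x)$ to \emph{every} class $y \neq \hat y$ and sum, rather than keeping only the single term at $y^\star$ (which is where you lose the $f(\hat y|x)$ term). Concretely, since $q(\cdot|x,z^n)$ is normalised,
\begin{align}
q(\hat y|x,z^n) &= 1 - \sum_{y\neq \hat y} q(y|x,z^n) \le 1 - e^{-R}\sum_{y\neq \hat y} f(y|x) = 1 - e^{-R}\bigl(1 - f(\hat y|x)\bigr),\nonumber
\end{align}
and multiplying by $e^{R}$ gives $q(\hat y|x,z^n)\,e^{R} \le e^{R} - 1 + f(\hat y|x)$. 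Combining with your first inequality $f(y^\star|x) \le q(\hat y|x,z^n)e^{R}$ yields $f(y^\star|x) - f(\hat y|x) \le e^{R}-1$, which is the claim. This is precisely the route taken in the paper (Appendix~\ref{app:bound_nice}); note also that the fact $R \ge 0$, which you spend effort establishing, is not needed for the bound itself --- it only confirms that the right-hand side is non-negative, consistent with the optimality of $h_\textnormal{MAP}$.
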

\begin{proof}
Let us use $\delta \coloneqq R(f,q|x,z^n)$ as a shorthand notation throughout the proof. Then, $\ln f(y|x) \leq \delta + \ln q(y|x)$ for all $y\in\mathcal{Y}$. Then, one can show that
\begin{align}
\prob{ Y = h_\text{MAP}(X) | X=x} &= f( h_\text{MAP}(x) |x) \nonumber\\
& \leq e^\delta q( h_\text{MAP}(x) |x,z^n) \nonumber\\
& \leq e^\delta q( h_q(x,z^n) |x,z^n) ,\nonumber
\end{align}
where the last equality holds because $h_q(x,z^n) = \argmax_{y \in \mathcal{Y}} q(y|x,z^n)$. Now, note that for all $y_0\in\mathcal{Y}$ one has that
\begin{align}
q(y_0|x,z^n) &= 1 - \sum_{y\neq y_0} q(y|x,z^n) \nonumber \\
& \leq 1 - e^{-\delta} \sum_{y\neq y_0} f(y|x) \nonumber \\
& = 1 - e^{-\delta} \Big( 1 - f(y_0|x) \Big) .\nonumber
\end{align}
Then, this gives
\begin{align}
\mathbb{P}\big\{ Y = &h_\text{MAP}(X) | X=x \big\}  \leq e^\delta \Big[ 1 - e^{-\delta} + e^{-\delta} f\big( h_q(x,z^n) |x\big) \Big] \nonumber\\
&= e^\delta - 1 + \prob{ Y = h_q(X,Z^n) | X=x,Z^n=z^n},\nonumber
\end{align}
from where the desired result follows.
\end{proof}

Note that $R(f,q|x,z^n) \geq 0$ and hence $e^R-1$ is non-negative, which is consistent with the optimality of the MAP hypothesis. 

\begin{lemma}\label{lemma:RR}
For any model class $\mathcal{M}$, the following bound holds:
\begin{equation}
R(q,f|x,z^n) \leq \Delta(f, \mathcal{M}|x) + \textnormal{REG}_\textnormal{max}(q, \mathcal{M} | x,z^n),\nonumber
\end{equation}
with $R(q,f|x,z^n)$ as defined in Lemma~\ref{lemma:ineq}.
\end{lemma}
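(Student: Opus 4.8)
The plan is to establish the claim as a pointwise triangle inequality for log-likelihood ratios, followed by two elementary relaxations. First I would fix an arbitrary $p\in\mathcal{M}$ and note the identity, valid for every $y\in\mathcal{Y}$,
\[
\ln\frac{f(y|x)}{q(y|x,z^n)} \;=\; \ln\frac{f(y|x)}{p(y|x)} \;+\; \ln\frac{p(y|x)}{q(y|x,z^n)}.
\]
Taking the maximum over $y\in\mathcal{Y}$ and using subadditivity of the maximum, i.e. $\max_y(a_y+b_y)\le \max_y a_y + \max_y b_y$, gives
\[
R(f,q|x,z^n)\;\le\;\max_{y\in\mathcal{Y}}\ln\frac{f(y|x)}{p(y|x)}\;+\;\max_{y\in\mathcal{Y}}\ln\frac{p(y|x)}{q(y|x,z^n)},
\]
where $R$ is the redundancy of Lemma~\ref{lemma:ineq}.

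Next I would relax the second summand upwards: since it is one particular member of the supremum defining the maximal regret, it is bounded by $\sup_{p'\in\mathcal{M}}\max_{y}\ln\frac{p'(y|x)}{q(y|x,z^n)} = \textnormal{REG}_\textnormal{max}(q,\mathcal{M}|x,z^n)$, a quantity that no longer depends on $p$. The resulting inequality
\[
R(f,q|x,z^n)\;\le\;\max_{y\in\mathcal{Y}}\ln\frac{f(y|x)}{p(y|x)}\;+\;\textnormal{REG}_\textnormal{max}(q,\mathcal{M}|x,z^n)
\]
holds for every $p\in\mathcal{M}$. Finally, since the second summand is now $p$-free, I would take the infimum over $p\in\mathcal{M}$ of the first summand, which by definition is $\inf_{\bm\theta}\max_y\ln\frac{f(y|x)}{p_{\bm\theta}(y|x)}=\Delta(f,\mathcal{M}|x)$, yielding the stated bound.

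I do not expect a genuine obstacle here; the only points requiring care are bookkeeping ones. One must make sure the $\textnormal{REG}_\textnormal{max}$ term is already independent of $p$ before the infimum is pulled through, and one should handle the degenerate case in which $q(y|x,z^n)=0$ for some $y$ with $f(y|x)>0$ via the standard convention that then $R=+\infty$ and the inequality holds trivially — a situation that does not arise for the NML distribution $q_{\textnormal{NML},\bm\phi}$ invoked in Theorem~\ref{prop:bound_nice}, since a finite alphabet $\mathcal{Y}$ keeps the normaliser $Z(x;\Theta(z^n))$ finite and the NML weights strictly positive wherever some $p_{\bm\theta}$ is.
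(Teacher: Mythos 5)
Your proof is correct and follows essentially the same route as the paper: the telescoping identity $\ln\frac{f}{q}=\ln\frac{f}{p}+\ln\frac{p}{q}$, subadditivity of the maximum over $y$, bounding the second term by $\textnormal{REG}_\textnormal{max}$, and taking the infimum over $p\in\mathcal{M}$ of the remaining $p$-dependent term (the paper phrases the last two steps via the inequality $\inf_p\{F(p)+G(p)\}\leq\inf_p F(p)+\sup_p G(p)$). Your extra remark on the degenerate case $q(y|x,z^n)=0$ is a harmless addition not present in the paper.
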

\begin{proof}
First, note that
\begin{equation}
R(q,f|x,z^n) = \max_{y\in\mathcal{Y}} \Big\{ \ln \frac{f(y|x)}{p(y|x)} + \ln \frac{p(y|x)}{q(y|x,z^n)} \Big\},\nonumber
\end{equation}
which holds for all $p \in \mathcal{M}$. This implies that
\begin{align}
R(q,f|x,z^n) &= \inf_{p \in \mathcal{M}} \max_{y\in\mathcal{Y}} \Big\{ \ln \frac{f(y|x)}{p(y|x)} + \ln \frac{p(y|x)}{q(y|x,z^n)} \Big\} \nonumber\\
&\leq \inf_{p \in \mathcal{M}} \max_{y\in\mathcal{Y}} \ln \frac{f(y|x)}{p(y|x)} + \sup_{p \in \mathcal{M}} \max_{y\in\mathcal{Y}} \ln \frac{p(y|x)}{q(y|x,z^n)},\nonumber
\end{align}
proving the desired result. Note that, above, the last inequality is a consequence of the fact that 
\begin{align}
\inf_x \{ f(x) + g(x) \} &\leq \inf_x \{ f(x) + \sup_x g(x) \} \nonumber\\
&= \inf_x f(x) + \sup_x g(x).\nonumber
\end{align}
\end{proof}

\section{Proof of Lemma~\ref{eq:bound_fisher}}\label{App-proof-eq:bound_fisher}

\begin{proof}
For the second part, %
let us enumerate the possible classes as $\mathcal{Y} = \{y_1,\dots,y_K\}$. Now, for given training data $z^n\in \mathcal{Z}^n$, we introduce the shorthand notation $\bm\theta_k \coloneqq \argmax_{\bm\theta \in \text{supp}(\bm\phi|z^n)} p_{\bm\theta} (y_k|x)$ for $k=1,\dots,K$. Then,
\begin{align}
\exp\big\{ \mathcal{L}(\bm\phi\rightarrow Y|x,z^n) \big\} &= \sum_{k=1}^K p_{\bm\theta_k}(y_k|x) \nonumber\\
&= 1 + \sum_{k=2}^K \Big[ p_{\bm\theta_k}(y_k|x) - p_{\bm\theta_1}(y_k|x) \Big] \nonumber\\
&\leq 1 + \sum_{k=2}^K 2 d_{\text{TV}} \big( p_{\bm\theta_k}(y|x), p_{\bm\theta_1}(y|x) \big) \nonumber\\
&\leq 1 + \sum_{k=2}^K \sqrt{ 2  D \big( p_{\bm\theta_k}(y|x) || p_{\bm\theta_1}(y|x) \big) }.\nonumber
\end{align}
Above, $d_{\text{TV}}\big(p(y|x),q(y|x)\big) \coloneqq 1/2 \sum_{y\in\mathcal{Y}} \big| p(y|x) - q(y|x) \big|$ is the total variation distance, and the last inequality is a direct application of the well-known Pinsker inequality. To finish the proof, note that
\begin{align}
\partial_i D\big( p_{\bm\theta}(Y|x) || p_{\bm\theta_0}(Y|x) \big) \Big|_{\bm\theta =\bm\theta_0} &= 0,\nonumber\\
\partial^2_{i,j} D\big( p_{\bm\theta}(Y|x) || p_{\bm\theta_0}(Y|x)) \Big|_{\bm\theta =\bm\theta_0} &= I_{i,j} (\bm\theta_0| x) .\nonumber
\end{align}
Therefore, a first order Taylor expansion of the Kullback-Leibler divergence on $\bm\theta$ centered in $\bm\theta_0$ that expresses the reminder according to the Lagrange form~\cite{lang2012calculus} gives
\begin{equation}
D \big( p_{\bm\theta_k}(Y|x) || p_{\bm\theta_1}(Y|x)) = \frac{1}{2}(\bm\theta_k - \bm\theta_1)^T I(\tilde{\bm\theta}| x) (\bm\theta - \bm\theta_0),\nonumber
\end{equation}
where $\tilde{\bm\theta}_k = \tau_k \bm\theta_1 + (1-\tau_k) \bm\theta_k$ for some $\tau_k \in (0,1)$. Note that $\tilde{\bm\theta}_k \in \text{supp}(\bm\phi|x,z^n)$ due to the convexity of the latter. The proof concludes by noting that
\begin{equation}
(\bm\theta - \bm\theta_0)^T I(\tilde{\bm\theta}; x) (\bm\theta - \bm\theta_0) \leq ||\bm\theta - \bm\theta_0||^2 \sigma_\text{max}(\tilde{\bm\theta}|x), \nonumber
\end{equation}
due to the properties of the maximal eigenvalue $\sigma_\text{max}(\tilde{\bm\theta}|x)$.
\end{proof}

\section{Complete proof of Theorem~\ref{teo:pac}}
\label{app:pac}

\begin{proof}

Let us consider a given $x\in\mathcal{X}$. As $\hat{\theta}$ is a consistent estimator of $\bm\theta_0$, then for given $\delta,\rho >0$ there exists $n_{\bm\theta}(\delta,\rho) \in \mathbb{N}$ such that for all $n\geq n_{\bm\theta}(\delta,\rho)$ the following holds:
\begin{equation}
\prob{ || \hat{ \theta}(Z^n) - \bm\theta_0 || \geq \rho } < \delta~.\nonumber
\end{equation}
This implies that $B \coloneqq \{ z^n \in \mathcal{Z}^n : || \hat{ \theta}(z^n) - \bm\theta_0 || < \rho \}$ satisfies $\prob{ Z^n \in B} \geq 1-\delta$. Also, by defining $\bm\phi = \hat{\theta}(Z^n) + W_\rho$ with $W_\rho$ distributing uniformly over $B(\rho)  = \{\bm\theta\in\mathbb{R}^d: ||\bm\theta || < \rho \}$, then $\bm\theta_0 \in \text{supp}(\bm\phi|z^n)$ for all $z^n\in B$. This implies, in turn, that $\Delta( f, \text{supp}(\bm\phi|z^n)|x) = 0$. Therefore, using Theorem~\ref{prop:bound_nice} one finds that for all $z^n\in B$ the following inequality holds:
\begin{equation}\label{eq:ec1}
\texttt{E}(h_{\text{NML}, \bm\phi};x,z^n)  - \texttt{E}(h_{\text{MAP}};x) \leq \exp\big\{ \mathcal{L}(\bm\phi\rightarrow Y|x;z^n) \big\} - 1.
\end{equation}

To build a bound on $\mathcal{L}(\bm\phi\rightarrow Y|x;z^n)$, let us define
\begin{equation}\label{eq:sigma_rho}
\sigma_\text{max}^{(\rho)}\big(\bm\theta_0 |x \big) \coloneqq \sup_{||\bm\theta-\bm\theta_0 || < \rho} \sigma_\text{max}( \bm\theta | x). 
\end{equation}
By using the fact that $|| \tilde{\bm\theta} - \bm\theta || < 2\rho$ for any $\tilde{\bm\theta},\bm\theta\in\text{supp}(\bm\phi|z^n)$, a direct application of Lemma~\ref{eq:bound_fisher} shows that
\begin{equation}\label{eq:ec3}
\exp\big\{ \mathcal{L}(\bm\phi\rightarrow Y|x;z^n) \big\} \leq 
1 + 2\rho K \sqrt{ \sigma^{(\rho)}_\text{max}\big(\hat{\theta}(z^n) | x \big) }.
\end{equation}

Finally, for given $\delta, \epsilon >0$ one calculates 
\begin{equation}
\rho_\epsilon(x,z^n) = \min\{\epsilon, \epsilon/\texttt{C}_\epsilon (x;z^n) \}
\nonumber
\end{equation} 
with $\texttt{C}_\epsilon(x;z^n)\coloneqq 2K \sqrt{ \sigma^{(\epsilon)}_\text{max}\big(\hat{\theta}(z^n) | x \big) }$, which is well defined  %
for small $\epsilon$sien la  as $\bm\theta_0$ is an interior point.
Then, noting that $\rho\leq \epsilon$ implies that $\sigma^{(\rho)}_\text{max}\big(\hat{\theta}(z^n|x) \big) \leq \sigma^{(\epsilon)}_\text{max}\big(\hat{\theta}(z^n|x) \big)$, one can find that for all $n\geq n_{\bm\theta}\big(\rho_\epsilon(x;z^n), \delta\big)$ it is guaranteed that
\begin{equation}
\texttt{E}(h_{\text{NML},\bm \phi};x,z^n)  - \texttt{E}(h_\text{MAP};x) \leq \epsilon,\nonumber
\end{equation}
where the inequality holds for all $z^n \in B$. 
\end{proof}

\section{Proof of Corollary~\ref{cor:pac} }
\label{app:cor_pac}

\begin{proof}
Let us denote as $\text{T}(\bm\theta|x) : = \sum_{i=1}^d \Big[ I(\bm\theta|x) \Big]_{i,i}$ the trace of $ I(\bm\theta|x) $, and $\text{T}(\bm\theta) = \mathbb{E}\{ \text{T}(\bm\theta|X) \}$. Moreover, let us define
\begin{equation}
\text{T}^{(\rho)}(\bm\theta_0|x) \coloneqq \sup_{||\bm\theta-\bm\theta_0 || < \rho} \text{T}(\bm\theta|x)~.
\end{equation}
Then, by considering Eqs.~\eqref{eq:ec1} and \eqref{eq:ec3} and noting that $\sigma_\text{max}(\bm\theta|x) \leq \text{T}(\bm\theta|x)$, one can show that
\begin{align}
\E{\texttt{E}(h_{\text{NML},\bm\phi};X,z^n)} - &\E{\texttt{E}(h_{\text{MAP}};X)} \nonumber\\
&\leq 2\rho K \E{  \sqrt{ \sigma^{(\rho)}_\text{max}\big(\hat{\theta}(z^n) ;X \big) } } \nonumber\\
&\leq 2\rho K \E{  \sqrt{ \text{T}^{(\rho)}\big(\hat{\theta}(z^n) ;X \big) } } \nonumber\\
&\leq 2\rho K \sqrt{ \text{T}^{(\rho)}\big(\hat{\theta}(z^n) \big) }~.\nonumber
\end{align}
The last step uses the well-known Jensen inequality. %
Finally, the corollary is proven by selecting $\rho_\epsilon(z^n) = \min\{\epsilon, \epsilon/\texttt{D}_\epsilon (z^n) \}$ with $\texttt{D}_\epsilon(z^n)\coloneqq 2K \sqrt{ \text{T}^{(\epsilon)}\big(\hat{\theta}(z^n) \big) }$.
\end{proof}

\section{Plug-in hypothesis does not guarantee\\ heuristic PAC learning}
\label{app:plug-in}

Consider the plug-in hypothesis, which corresponds to our NML strategy with $\bm\phi = \hat{\bm\theta}$ and hence $h_{\text{NML},\bm\phi} = h_{p_{\hat{\bm\theta}}}$. Here we show that our proof of heuristic PAC learning cannot be applied --- at least directly --- to this case.

Let us consider how Theorem~\ref{prop:bound_nice} could be used in this scenario. As in this case $\bm\phi$ defines a particularly narrow model, i.e. $\text{supp}(\bm\phi|z^n) = \{\hat{\bm\theta}\}$, then is direct to verify that $\mathcal{L}(\bm\phi\rightarrow Y|x,z^n) = 0 $ and $\Delta\big(p_{\bm\theta_0},\text{supp}(\bm\phi|z^n)|x\big) = \max_{y\in\mathcal{Y}} \ln f(y|x) / p_{\hat{\bm\theta}}(y|x)$. While the consistency of $\hat{\bm\theta}$ guarantees the convergence to zero of $\Delta\big(p_{\bm\theta_0},\text{supp}(\bm\phi|z^n)|x\big)$ for each $x\in\mathcal{X}$, guaranteeing stronger types of convergence (which would be needed to prove heuristic PAC learning) is not straightforward. In particular, notice that to guarantee the convergence of $\sup_{x\in\mathcal{X}}\Delta\big(f, \text{supp}(\phi|z^n)|x\big)$ to zero as $n$ grows, as one would need a function $C(\bm\theta)$ such that for large $n$ the following holds for all $x\in\mathcal{X},y\in\mathcal{Y}$:
\begin{equation}
\ln p_{\bm\theta_0}(y|x) - \ln p_{\hat{\bm\theta}}(y|x) \leq C(\bm\theta_0) \cdot || \bm\theta_0 - \hat{\bm\theta} ||.
\end{equation} 
However, if the cardinality of $\mathcal{X}$ is infinite, it is possible to build examples where no such $C(\bm\theta)$ exists, even if $p_{\hat{\bm\theta}}(y|x) \rightarrow p_{\bm\theta_0}(y|x)$ for each $x\in\mathcal{X},y\in\mathcal{Y}$. This is a consequence of the fact that the derivative of the logarithm is unbounded within the interval $(0,1)$.

\section{Proof of Proposition~\ref{prop:MLE}}
\label{app:proof_MLE}

\begin{proof}

Under appropriate assumptions, the MLE $\hat{\theta}(Z^n)$ satisfies the Berry-Esseen bound~\cite{Pfanzagl:1973}
\begin{align}
   \left| \prob{  \left\| \sqrt{n} I^{1/2}(\bm\theta_0) \left( \hat{\theta}(Z^n) - \bm\theta_0 \right) \right\|^2 \leq G_d^{-1}(t)  } - t \right| &\le \frac{c}{\sqrt{n}}, \nonumber
\end{align}
where $G_d(\cdot)$ is the CDF of the chi-squared distribution with $d$ degrees of freedom, and $c$ is an absolute constant.
Therefore, using the fact that
\begin{equation}
\sqrt{\sigma_\text{min}(\bm\theta_0)} \cdot || \hat{\theta}(Z^n) - \bm\theta_0 ||
\leq 
|| I^{1/2}(\bm\theta_0) \left( \hat{\theta}(Z^n) - \bm\theta_0 \right) || 
,\nonumber
\end{equation}
one can show that
\begin{align}
    \prob{  n\sigma_\text{min}(\bm\theta_0) \left\| \hat{\theta}(Z^n) - \bm\theta_0 \right\|^2 \leq G_d^{-1}(t)  } & \ge   t - \frac{c}{\sqrt{n}}
    .\nonumber 
\end{align}
Then, by taking $t = 1-\delta + c n^{-1/2}$, and assuming that $n$ is large enough so that $t\in [0,1]$, then one can find that
\begin{align}
    \prob{  \left\| \hat{\theta}(Z^n) - \bm\theta_0 \right\| 
    \leq 
    \sqrt{ \frac{ G^{-1}_d(1-\delta+c n^{-1/2}) } 
    { n \sigma_\text{min}(\bm \theta_0)} }  
    } & \ge   1 - \delta
    .\nonumber 
\end{align}
Note that $\sigma_\text{min}(\bm\theta_0)>0$ because $I(\bm\theta_0)$ is assumed to be positive definite.

Therefore, by considering $\bm\psi \coloneqq \hat{\theta}(z^n) + W$ with $W$ uniformly distributed over a ball of radius 
\begin{equation}
\rho_n \coloneqq \sqrt{ \frac{G^{-1}_d(1-\delta+c n^{-1/2})}{n\sigma_\text{min}(\bm\theta_0)}},
\end{equation}
then $\bm\theta_0\in\text{supp}(\bm\psi|z^n)$ for all $z^n\in B\subset\mathcal{Z}^n$ with $\prob{Z^n\in B} = 1-\delta$. Then, $\Delta(\text{supp}(\bm\psi|z^n),f)=0$ for all $z^n\in B$, and hence the first inequality in \eqref{eq:ineqs} can be proven using Theorem~\ref{prop:bound_nice}. 

For proving the second inequality, note that a direct application of Lemma~\ref{eq:bound_fisher} shows that
\begin{equation}
\exp\big\{ \mathcal{L}(\bm\psi\rightarrow Y|x;z^n) \big\} \leq 
1 +  
\frac{2}{\sqrt{n}} 
\cdot
\sqrt{ \frac{\sigma^{(\rho_n)}_\text{max}\big(\hat{\theta}(z^n) |x \big)} {\sigma_\text{min}(\bm \theta_0)} },
\nonumber
\end{equation}
with $\sigma^{(\rho)}_\text{max}$ defined as in Eq.~\eqref{eq:sigma_rho}. 
Furthermore, by noting that by construction of $\rho_n$ it is guaranteed that $|| \hat{\theta}(Z^n) - \bm \theta_0|| \leq \rho_n$ with probability $1-\delta$, then 
$\sigma^{(\rho_n)}_\text{max}\big(\hat{\theta}(z^n) |x \big) 
\leq
\sigma^{(2\rho_n)}_\text{max}\big(\bm \theta_0 |x \big)$, which in turn implies that 
\begin{equation}
\exp\big\{ \mathcal{L}(\bm\psi\rightarrow Y|x;z^n) \big\} \leq 
1 +  
\frac{2}{\sqrt{n}} 
\cdot
\sqrt{ \frac{\sigma^{(2\rho_n)}_\text{max}\big( \bm \theta_0 |x \big)} {\sigma_\text{min}(\bm \theta_0)} }.
\nonumber
\end{equation}
Finally, the proof concludes by noting that $\rho_n$, and hence also $\sigma^{(\rho_n)}_\text{max}$, decrease with $n$.

\end{proof}

\section{Proof of Proposition~\ref{prop:non}}
\label{app:non}

\begin{proof}
Let us consider $\epsilon,\delta>0$, and define $\bm\phi: = \hat{\theta}(Z^n) + W_\rho \in \mathbb{R}^d$ with $W_\rho$ distributed uniformly over a $d$-dimensional ball of radius $\rho>0$. By the properties of $\hat{\theta}$, there exists $n_0(\delta,\rho) \in\mathbb{N}$ such that for all $n>n_0$ then there exists $\bm\theta_0\in \Theta_f \cap \text{supp}(\bm\phi|z^n) $, for all $z^n\in B$ with $\mathbb{P}\{ Z^n \in B \} > 1 - \delta$. Then, it is direct to check that $\Delta\big(f, \text{supp}(\bm\phi|z^n) \big) < \epsilon_0$ for all $z^n \in B$. Additionally, following the proof of Theorem~\ref{teo:pac} (in particular, the derivation that leads to Eq.\eqref{eq:ec3}), one can check that the fact that $\text{supp}(\bm\phi|z^n)$ has a bounded support implies that 
\begin{equation}\label{eq:ec33}
\exp\big\{ \mathcal{L}(\bm\phi\rightarrow Y|x;z^n) \big\} \leq 
1 + 2\rho K \sqrt{ \sigma^{(\rho)}_\text{max}\big(\hat{\theta}(z^n) | x \big) },
\nonumber
\end{equation}
with $\sigma^{(\rho)}_\text{max}(\bm\theta|x)$ as defined in Eq.~\eqref{eq:sigma_rho}.

With these results at hand, let us now choose $\rho(x;z^n) = \min\{\epsilon_0, \epsilon_0/\texttt{C}_{\epsilon_0} (x;z^n) \}$ with $\texttt{C}_{\epsilon_0}(x;z^n)\coloneqq 2K \sqrt{ \sigma^{(\epsilon_0)}_\text{max}\big(\hat{\theta}(z^n) | x \big) }$. Using these results and Theorem~\ref{prop:bound_nice}, and the fact that $e^x\approx 1+x$ for $1\gg |x|$, one finds that for all $n > n_0\big(\delta,\rho(x;z^n)\big)$ then
\begin{align}
\texttt{E}(h_{\text{NML},\bm \phi};x,z^n)  - \texttt{E}(h_\text{MAP};x) 
\leq& e^{\Delta(f,\text{supp}(\bm\phi|z^n))} e^{ \mathcal{L}(\bm\phi\rightarrow Y|x;z^n) } \nonumber\\
&- 1 \nonumber\\
\leq& 2\epsilon_0 + \epsilon_0^2. \nonumber
\end{align}
Finally, the proof concludes by selecting $\epsilon_0$ such that 
\begin{equation}
    2\epsilon_0 + \epsilon_0^2 < \epsilon.
\end{equation}
\end{proof}

\bibliographystyle{IEEEtran}

\end{document}